\documentclass{article}

\usepackage{microtype}
\usepackage{graphicx}
\usepackage{float}
\usepackage{caption,subcaption}
\usepackage{booktabs}
\usepackage{hyperref}

\usepackage[accepted]{icml2024}

\usepackage{amsmath}
\usepackage{amssymb}
\usepackage{mathtools}
\usepackage{amsthm}











\def\eqref#1{equation~\ref{#1}}







\def\1{\bm{1}}










\DeclareMathAlphabet{\mathsfit}{\encodingdefault}{\sfdefault}{m}{sl}
\SetMathAlphabet{\mathsfit}{bold}{\encodingdefault}{\sfdefault}{bx}{n}











\newcommand{\R}{\mathbb{R}}



\DeclareMathOperator*{\argmin}{arg\,min}

\newcommand{\norm}[1]{\left\lVert#1\right\rVert}
\newcommand{\paren}[1]{\left ( #1\right)}
\newcommand{\set}[1]{\left \{ #1\right\}}
\newcommand{\abs}[1]{\left\lvert#1\right\rvert}
\def\R{\mathbb{R}}
\def\bv{\mathbf{v}}

\def\bbN{\mathbb{N}}

\def\cG{\mathcal{G}}
\def\cM{\mathcal{M}}

\def\cX{\mathcal{X}}




\usepackage[capitalize,noabbrev]{cleveref}

\theoremstyle{plain}
\newtheorem{theorem}{Theorem}[section]
\newtheorem{proposition}[theorem]{Proposition}
\newtheorem{lemma}[theorem]{Lemma}

\theoremstyle{definition}
\newtheorem{definition}[theorem]{Definition}

\theoremstyle{remark}

\usepackage[textsize=tiny]{todonotes}

\icmltitlerunning{(Deep) Generative Geodesics}

\begin{document}

\twocolumn[
\icmltitle{(Deep) Generative Geodesics}

\icmlsetsymbol{equal}{*}

\begin{icmlauthorlist}
\icmlauthor{Beomsu Kim}{equal,yyy}
\icmlauthor{Michael Puthawala}{equal,comp}
\icmlauthor{Jong Chul Ye}{yyy}
\icmlauthor{Emanuele Sansone}{sch}
\end{icmlauthorlist}

\icmlaffiliation{yyy}{Kim Jaechul Graduate School of AI, KAIST}
\icmlaffiliation{comp}{South Dakota State University}
\icmlaffiliation{sch}{KU Leuven}

\icmlcorrespondingauthor{Michael Puthawala}{michael.puthawala@sdstate.edu}
\icmlcorrespondingauthor{Beomsu Kim}{beomsu.kim@kaist.ac.kr}
\icmlcorrespondingauthor{Jong Chul Ye}{jong.ye@kaist.ac.kr}
\icmlcorrespondingauthor{Emanuele Sansone}{emanuele.sansone@kuleuven.be}

\icmlkeywords{Deep generative models, Riemannian geometry, Differential geometry}

\vskip 0.3in
]

\printAffiliationsAndNotice{\icmlEqualContribution}

\begin{abstract}
In this work, we propose to study the global geometrical properties of generative models. We introduce a new Riemannian metric to assess the similarity between any two data points. Importantly, our metric is agnostic to the parametrization of the generative model and requires only the evaluation of its data likelihood. Moreover, the metric leads to the conceptual definition of generative distances and generative geodesics, whose computation can be done efficiently in the data space. Their approximations are proven to converge to their true values under mild conditions. We showcase three proof-of-concept applications of this global metric, including clustering, data visualization, and data interpolation, thus providing new tools to support the geometrical understanding of generative models.
\end{abstract}

\section{Introduction}
What does it mean for two data points to be similar, what is the correct notion of similarity, and how can we measure similarity with generative models? Deep generative models work by effectively learning a generative map from a `simple' latent distribution, typically a Gaussian distribution in low-dimensional Euclidean space, to an empirical distribution in data space, called the data distribution.

Several works have locally studied the geometry of the data distribution by using the notion of the pullback metric, which requires access to the latent representation of a deep generative model and the Jacobian of the generative map. This enables (i) the control and manipulation of the generation process in the latent space~\citep{arvanitidis2018latent,chen2018metrics,shao2018riemannian,arvanitidis2021geometrically,arvanitidis2022prior,lee2022regularized,lee2022statistical,lee2023explicit,issenbuth2023unveiling,ramesh2019spectral,zhu2021low,choi2022do,park2023understanding,song2023latent}, and (ii) the investigation of the behavior of deep generative models, for instance through latent interpolation~\citep{berthelot2019inter,zhu2020vp,shao2018riemannian,michelis2021linear,arvanitidis2018latent,chen2018metrics,laine2018feature,struski2023feature}.

In this work, we go beyond the study of the local geometry of the data distribution and instead propose a global metric, which is agnostic to the internal parametrization of the deep generative model and only requires the evaluation of its data likelihood. This allows the introduction of two new concepts: \textit{generative distances} and \textit{generative geodesics}. Generative distances enable to assess the similarity between any two points in the data space according to the generative model, whereas generative geodesics identify the corresponding connecting path. Notably, we show that generative distances and geodesics can be efficiently approximated by first discretizing the ambient space using a graph constructed from training and/or synthetic data, by second attaching weights to this graph based on the proposed metric and then using shortest-path algorithms over the constructed weighted graph to overcome the curse of dimensionality. Moreover, we prove that this approximation converges to the true value under mild conditions on the data distribution. 

In summary, our key contributions are: (1) We conceptualize a new Riemannian metric that we call \textit{generative distances} and \textit{generative geodesics} along with approximation and convergence results, \S\ref{sec:generative}. (2) We combine recent advancements in geodesic computation and classical graph theory to efficiently compute our metric, \S\ref{sec:deepgenerative:approx-and-conv}. (3) We highlight the global nature of the new proposed metric by showcasing three proof-of-concept applications of our theory, including clustering, data visualization and data interpolation, \S\ref{sec:experiments}.

\subsection{Background on Geodesics and Metrics}
\label{sec:preliminaries}
Geodesics generalize the notion of shortest-path distance to spaces without straight-line distances or non-Euclidean distance.
\begin{figure}
    \centering
    \includegraphics[width=.6\linewidth]{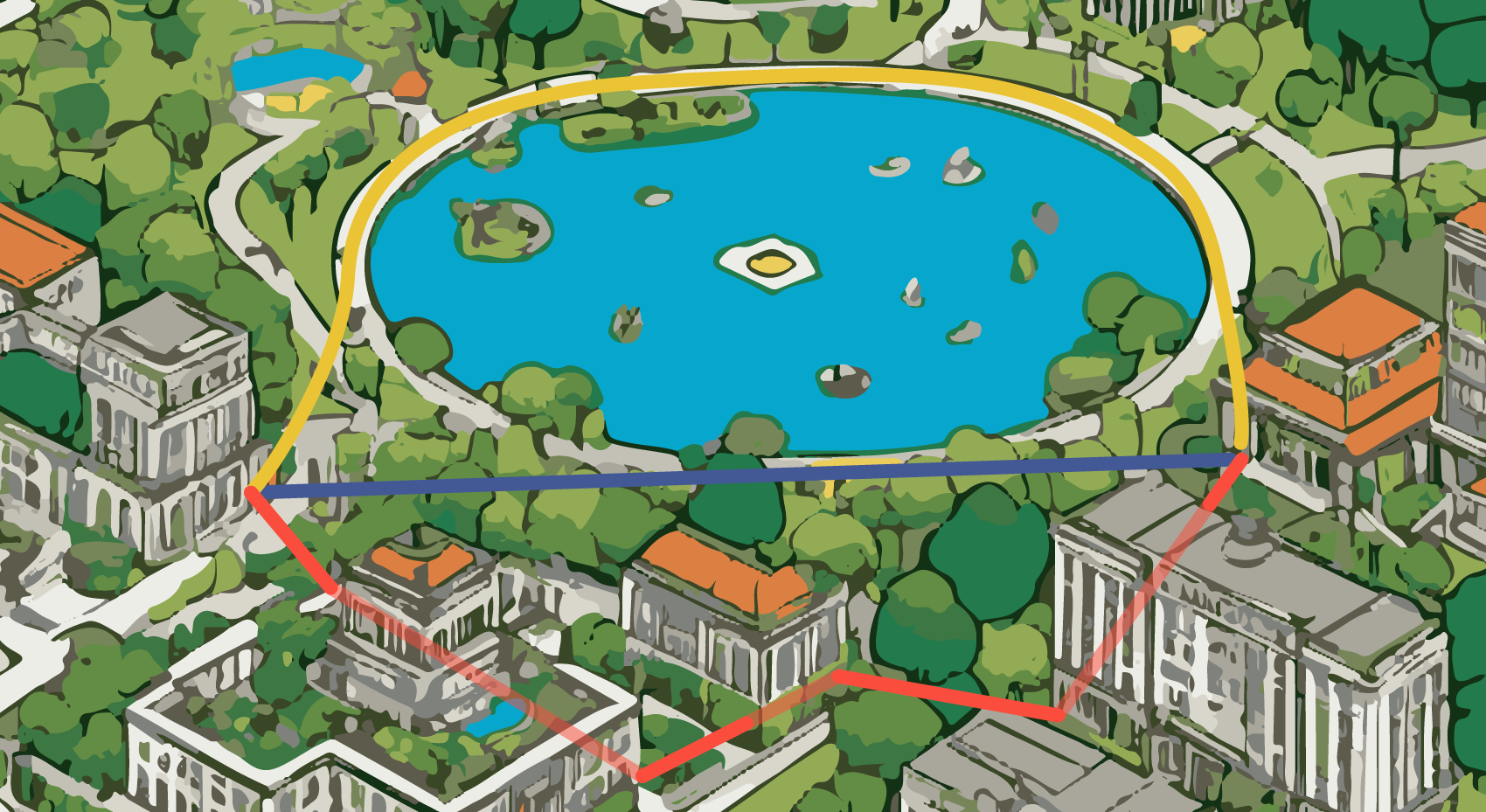}
    \caption{A visualization of how different paths can be optimal under different preferences. The blue path has the least walking, the yellow path is the most scenic, and the red path has the least sun exposure. Image generated with Stable Diffusion model.}
    \label{fig:varrying-path-fig}
\end{figure}
Suppose that you have to walk from one side of a town to the other, as in Fig. \ref{fig:varrying-path-fig}, and are considering the path that you'd like to take. The shortest path depends on your preferences, costs, budget, namely on the underlying model you consider (in our case a generative one). 

We now formalize this notion, and recall notations from differential geometry. For a detailed definition of the terms used, see e.g. Chap. 13 \cite{lee2013smooth} and Chaps. 5, 6 \cite{lee2006riemannian}. Let $\cM$ be a smooth manifold with tangent space $T_p\cM$ for point $p \in \cM$. A metric is a function $g_p \colon T_p\cM \times T_p\cM \to \R$ that is symmetric and positive definite. Given this, we define the $g$ norm and length of a smooth curve $\gamma  \colon [0, 1] \to \cM$ by
\begin{align}
    \norm{u}_g \coloneqq \sqrt{g_p(u,u)},\quad 
    \label{eqn:length-def}
    L_g(\gamma) \coloneqq \int_{0}^1 \norm{\gamma'(t)}_g dt.
\end{align}
A non-obvious but important fact, is that the notion of length given in Eqn. \ref{eqn:length-def} is parameterization independent. Put differently, $L_g(\gamma)$ measures the length of the path traced out by $\gamma$, not a property of how $\gamma$ is parameterized.

For any two $p,q \in \cM$ on the same connected component of a manifold, we may define the \emph{Riemannian distance from $p$ to $q$ induced by $g$} by
\begin{align}
    \label{eqn:minimal-distance}
    d_g(p,q) \coloneqq \inf_{\gamma \in \Omega(p,q)} L_g(\gamma)
\end{align}
where $\Omega(p,q)$ is the set of all paths that connect $p$ to $q$. A length minimizing path, that is an $\argmin$ of the r.h.s. of Eqn. \ref{eqn:minimal-distance}, is a \emph{geodesic}.

\section{Our Contribution}
\label{sec:our-contribution}

{\subsection{Generative Geodesics}\label{sec:generative}}

\begin{figure}
    \centering
    \begin{subfigure}{.65\linewidth}
        \centering
        \includegraphics[width=\linewidth]{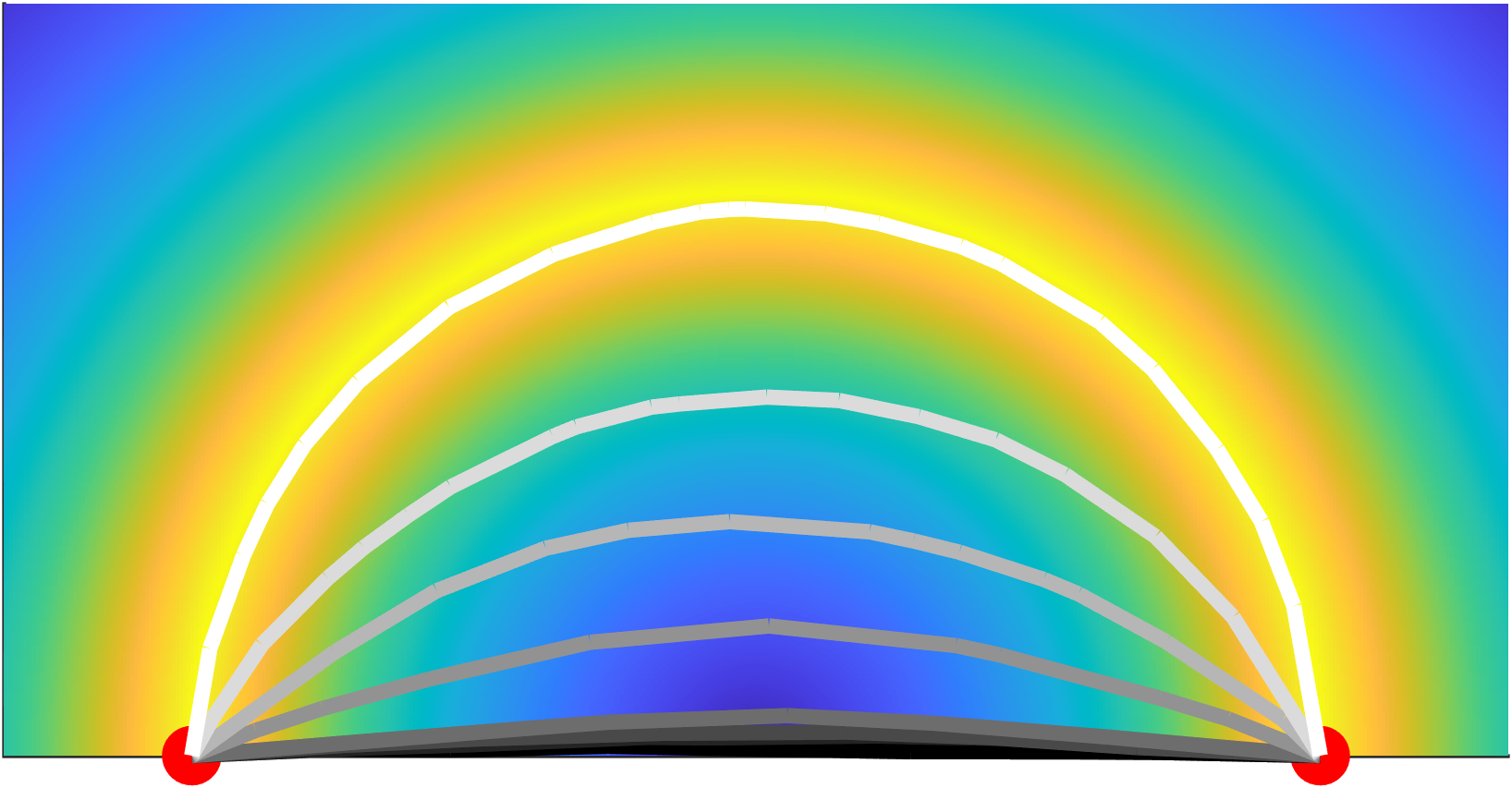}
    \end{subfigure}
    \caption{A showcase of how decreasing $\lambda$ changes the shortest path between the two red points. The black curve has $\lambda = 5$, the white curve $\lambda = 0.05$, and the other curves interpolate between the two. The background $p_\Psi(x)$ probability distribution is shown with a yellow to blue gradient with yellow indicating high probability and blue low probability.}
    \label{fig:comparison-of-parameters:a}
\end{figure}
Let $\Omega \subset \R^n$, $p_\Psi \colon \Omega \to \R^+$ be pointwise non-negative density function on $\Omega$, and $\lambda \geq 0$ and $p_0 >0$ be given.
We may define the \emph{generative metric} $g_{x,\Psi,\lambda}$ on $\Omega$ at point $x$ by 
\begin{align}
    \label{eqn:g-lambda-def}
    g_{x,\Psi,\lambda}(u,v) = \paren{\frac{p_0 + \lambda}{p_\Psi(x) + \lambda}}^2u \cdot v
\end{align}
where $\cdot$ denotes the Euclidean dot product. Define the Riemannian metric on $\Omega$, denoted $g_{\Psi,\lambda}$, as the flat metric defined point-wise on $g_{x,\Psi,\lambda}$. From this define the \emph{generative geodesic} $d_{\Psi,\lambda}$ on $\Omega$ using Eqn.s \ref{eqn:length-def} and \ref{eqn:minimal-distance}.

\begin{lemma}[$d_{\Psi,\lambda}$ Metric]
    \label{lem:d-lambda-metric-basic-props}
    Let $\lambda, p_0 > 0 $, and $p_\Psi$ be a smooth probability density on $\R^n$. Then $d_{\Psi,\lambda}$ is a Riemannian metric on $\R^n$.
    Further, for $x,y \in \R^n$,
    \begin{enumerate}
        \item as $\lambda \to \infty$, $d_{\Psi,\lambda}(x,y)$ converges to $d_2(x,y)$. That is $d_{\Psi,\lambda}$ converges to the Euclidean distance. 
        \item Let $p_\Psi(x) > 0$ for all $x$, and define $\tilde d$ to be the distance induced by the metric $\frac{p_0}{p_{\Psi}(x)}u \cdot v$. As $\lambda \to 0$, $d_{\Psi,\lambda} \to \tilde d$.
    \end{enumerate}
    Moreover, if $\Omega$ is bounded, then this convergence is uniform (i.e. doesn't depend on $x,y$).
\end{lemma}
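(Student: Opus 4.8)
Throughout write $\rho_\lambda(x) \coloneqq \frac{p_0+\lambda}{p_\Psi(x)+\lambda}$, so that, by Eqns.~\ref{eqn:length-def} and~\ref{eqn:minimal-distance}, the length functional is $L_\lambda(\gamma) = \int_0^1 \rho_\lambda(\gamma(t))\norm{\gamma'(t)}\,dt$ and $d_{\Psi,\lambda}(x,y) = \inf_{\gamma\in\Omega(x,y)} L_\lambda(\gamma)$, with $\norm{\cdot}$ the Euclidean norm. The plan is to treat the four claims separately but with one common tool: a pointwise two-sided bound on $\rho_\lambda$ that is then integrated along curves. For the first claim (that $d_{\Psi,\lambda}$ is a genuine metric), I would note that since $p_0,\lambda>0$ and $p_\Psi\ge 0$, the factor $\rho_\lambda$ is smooth and strictly positive; hence $g_{\Psi,\lambda}$ is a smooth, symmetric, positive-definite $(0,2)$-tensor, i.e.\ a bona fide Riemannian metric, and the standard theory (Chap.~13 of~\cite{lee2013smooth}) shows that a Riemannian metric on the connected manifold $\R^n$ induces a distance function satisfying symmetry and the triangle inequality. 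The only point needing care is strict positivity, $d_{\Psi,\lambda}(x,y)>0$ for $x\ne y$: since $\rho_\lambda$ is continuous it is bounded below by some $c>0$ on any compact ball around $x$, forcing $d_{\Psi,\lambda}(x,y)\ge c\norm{x-y}$ locally; finiteness is immediate from the straight segment.

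For claim~1 ($\lambda\to\infty$) I would sandwich $d_{\Psi,\lambda}$. The bound $p_\Psi\ge 0$ gives the uniform estimate $\rho_\lambda(x)\le \frac{p_0+\lambda}{\lambda}$, so evaluating $L_\lambda$ on the straight segment yields $d_{\Psi,\lambda}(x,y)\le\frac{p_0+\lambda}{\lambda}\norm{x-y}$. Conversely, on any region where $p_\Psi\le M$ one has $\rho_\lambda\ge\frac{p_0+\lambda}{M+\lambda}$, and since $\int_0^1\norm{\gamma'(t)}\,dt\ge\norm{x-y}$ for every admissible $\gamma$, we get $d_{\Psi,\lambda}(x,y)\ge\frac{p_0+\lambda}{M+\lambda}\norm{x-y}$. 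As both coefficients tend to $1$, these squeeze $d_{\Psi,\lambda}(x,y)\to\norm{x-y}=d_2(x,y)$.

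For claim~2 ($\lambda\to 0$, with $p_\Psi>0$), let $\rho_0\coloneqq p_0/p_\Psi$ denote the length factor of $\tilde d$ and $\tilde L$ its length functional. The elementary identity $\rho_\lambda-\rho_0=\frac{\lambda(p_\Psi-p_0)}{p_\Psi(p_\Psi+\lambda)}$ shows $\rho_\lambda\to\rho_0$ pointwise and, on any compact $K$ where $m\le p_\Psi\le M$, uniformly with $\abs{\rho_\lambda-\rho_0}\le\frac{\lambda(M+p_0)}{m^2}$. The upper direction $\limsup_\lambda d_{\Psi,\lambda}(x,y)\le\tilde d(x,y)$ follows by fixing a near-optimal $\tilde d$-curve $\gamma$ and passing $L_\lambda(\gamma)\to\tilde L(\gamma)$ (dominated convergence, using $\rho_\lambda\le (p_0+1)/m$ on $K$ for $\lambda\le 1$). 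The lower direction $\liminf_\lambda d_{\Psi,\lambda}(x,y)\ge\tilde d(x,y)$ follows from $L_\lambda(\gamma)\ge\tilde L(\gamma)-\frac{\lambda(M+p_0)}{m^2}\,\mathrm{len}(\gamma)$ once the competing curves are known to lie in a fixed $K$ with uniformly bounded Euclidean length $\mathrm{len}(\gamma)$.

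This confinement is the main obstacle, and the only place where boundedness genuinely enters. On a bounded $\Omega$ (so $\bar\Omega$ compact) continuity and positivity of $p_\Psi$ give global constants $0\le p_\Psi\le M$ and, for claim~2, $p_\Psi\ge m>0$; then every factor $\rho_\lambda,\rho_0$ is bounded below by a positive constant, so any curve of large Euclidean length is automatically non-optimal, confining near-minimizers to $\bar\Omega$ with length bounded in terms of the cheap straight-segment competitor. Crucially, in this regime all the constants above depend only on $\lambda,p_0,m,M$ and not on the endpoints, so the squeeze in claim~1 and the estimate $\abs{L_\lambda(\gamma)-\tilde L(\gamma)}\le\frac{\lambda(M+p_0)}{m^2}\,\mathrm{len}(\gamma)$ in claim~2 hold uniformly over $x,y$; this yields the stated uniform convergence. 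On all of $\R^n$ with possibly unbounded $p_\Psi$, I would recover the pointwise statements by the same argument after first using the straight-segment upper bound to restrict attention to a ball large enough to contain every near-optimal curve.
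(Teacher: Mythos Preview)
The paper does not actually prove this lemma: it is stated in \S\ref{sec:generative}, but the appendix contains only the proof of Theorem~\ref{thm:conv-of-k-approx-lin-interp-cost-to-geodesic-cost}. There is therefore nothing to compare against, and your proposal stands as the only proof on offer. It is correct and is the natural argument: the pointwise two-sided bound on $\rho_\lambda$, integrated along curves, is exactly the right device, and you have correctly isolated confinement of near-minimizers as the one non-routine step. On bounded $\Omega$ your argument is complete and yields the uniform statement as claimed.

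One small refinement is worth making explicit. Your closing sentence (``by the same argument after first using the straight-segment upper bound to restrict attention to a ball'') is slightly too quick on all of $\R^n$ when $p_\Psi$ is unbounded, since then $\rho_\lambda$ has no global positive lower bound and the straight-segment upper bound alone does not confine competitors. The fix is to bound only the portion of a curve \emph{inside} a ball: any $\gamma$ that exits $B_R(x)$ must first traverse an arc of Euclidean length at least $R$ lying in $\overline{B_R(x)}$, where $\rho_\lambda\ge\frac{p_0+\lambda}{M_R+\lambda}$ with $M_R\coloneqq\sup_{\overline{B_R(x)}}p_\Psi<\infty$; hence $L_\lambda(\gamma)\ge\frac{p_0+\lambda}{M_R+\lambda}\,R$. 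For any fixed $R>\norm{x-y}$ this eventually exceeds the straight-segment bound $\frac{p_0+\lambda}{\lambda}\norm{x-y}$ as $\lambda\to\infty$, so near-optimal curves stay in $\overline{B_R(x)}$ and your sandwich applies there. The same local-exit estimate handles claim~2. With this adjustment the proposal is complete.
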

The lemma allows us to understand the role and effects of changing $\lambda$.
If $\lambda$ is large relative to $p_0$ and $p_\Psi$, then ${\frac{p_0 + \lambda}{p_\Psi(x) + \lambda}} \approx 1$, and $d_{\Psi,\lambda}$ reduces to the Euclidean distance. As $\lambda$ decreases, $d_{\Psi,\lambda}$ tends to consider points to be close if they are connected via a path of high likelihood, and far apart otherwise. This idea is illustrated through an example in Figure \ref{fig:comparison-of-parameters:a}. If $p_\Psi$ is the likelihood associated with a generative model, then being connected via a path of high likelihood implies that the two points lie in the same connected components of the data manifold. It is for this reason that we refer to it as the generative geodesic.
\begin{definition}[Generative Geodesic]
    \label{def:generative-geodesic}
    For any $\lambda > 0$ and $x,y \in \Omega$ we call $d_{\Psi,\lambda}(x,y)$ the \emph{generative distance} from $x$ to $y$, and we call a $d_{\Psi,\lambda}$ length-minimizing $\gamma^* \in \Omega(x,y)$ a \emph{generative geodesic}.
\end{definition}

{\subsection{Approximation and Convergence Guarantee}
\label{sec:deepgenerative:approx-and-conv}}

Our metric can be readily approximated using the graph-theory approach given by \cite{davis2019approximating}. The idea is to first form an epsilon graph from the points, then give the graph edge weights by using a quadrature for Eqn. \ref{eqn:length-def}, then use a shortest path finding algorithm. We denote by $\cX$ the set of data, either training or synthetic. Define the $\epsilon$ graph, notated $\cG_\epsilon(\cX)$, as the graph formed by drawing edges between all pairs in $\cX$ less than $\epsilon$ apart. For a visualization of $\cG$ on two-dimensional domain, see Figure \ref{fig:eps-graph}.

\begin{definition}[$\epsilon$ Graph]
    \label{def:eps-graph}
    Let $\epsilon > 0$ and $\cX \coloneqq \set{x^{(i)}}_{i \in I}$ be given. The \emph{$\epsilon$ graph}, denoted by $\cG_\epsilon(\cX)$, is defined as the graph with vertices $V \coloneqq \cX$ and edges
    \begin{align*}
        E_\epsilon \coloneqq \set{(x,x') \colon \text{where } x,x'\in \cX \text{ are s.t. } \norm{x - x'} < \epsilon }.
    \end{align*}
\end{definition}
\begin{figure}
    \centering
    \begin{subfigure}{.38\linewidth}
        \centering
        \includegraphics[width=1.\linewidth]{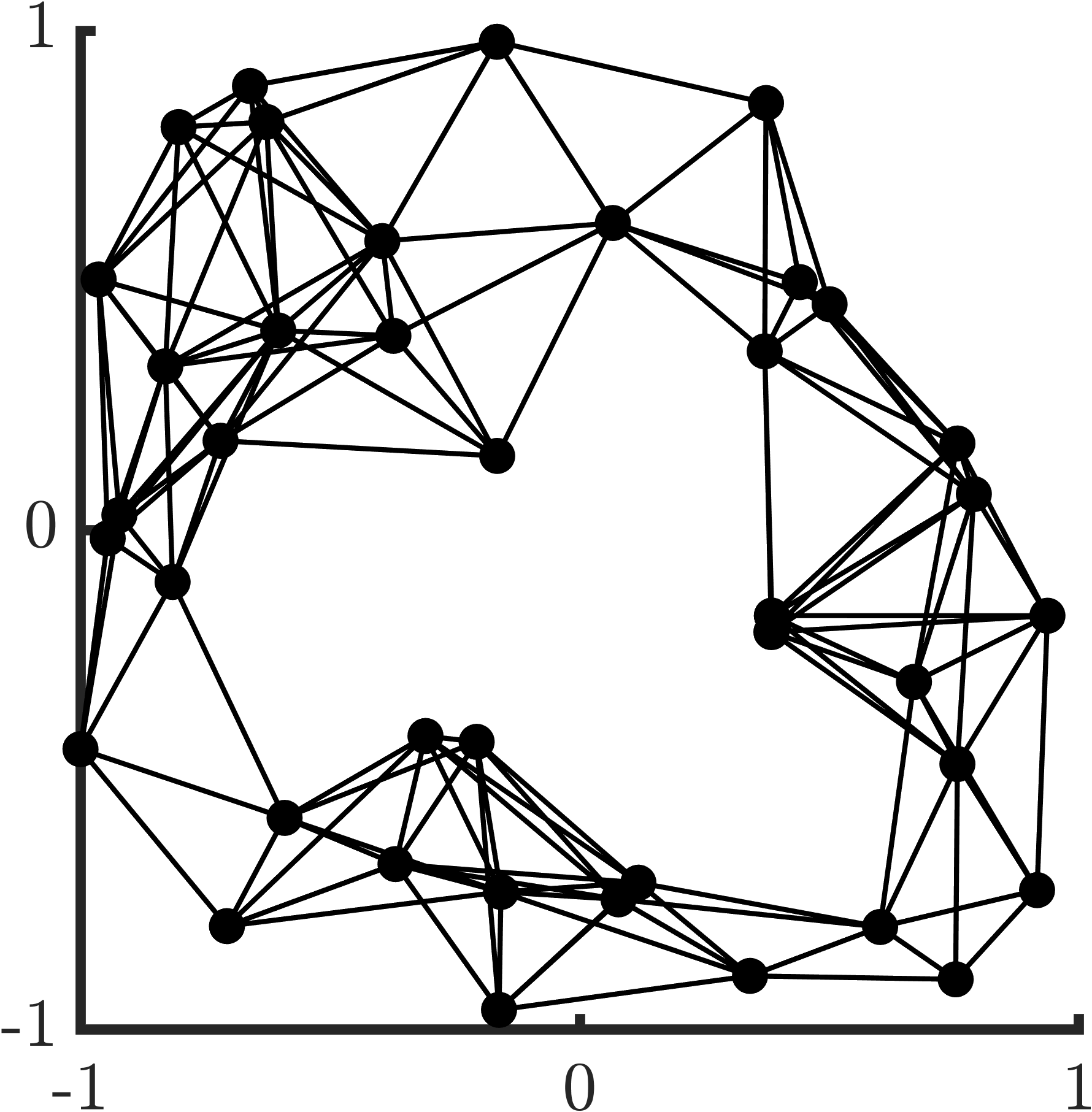}
        \subcaption{Plot of $\cG_{0.56}(\cX)$}
        \label{fig:eps-graph:2}
    \end{subfigure}
    \begin{subfigure}{.38\linewidth}
        \centering
        \includegraphics[width=1.\linewidth]{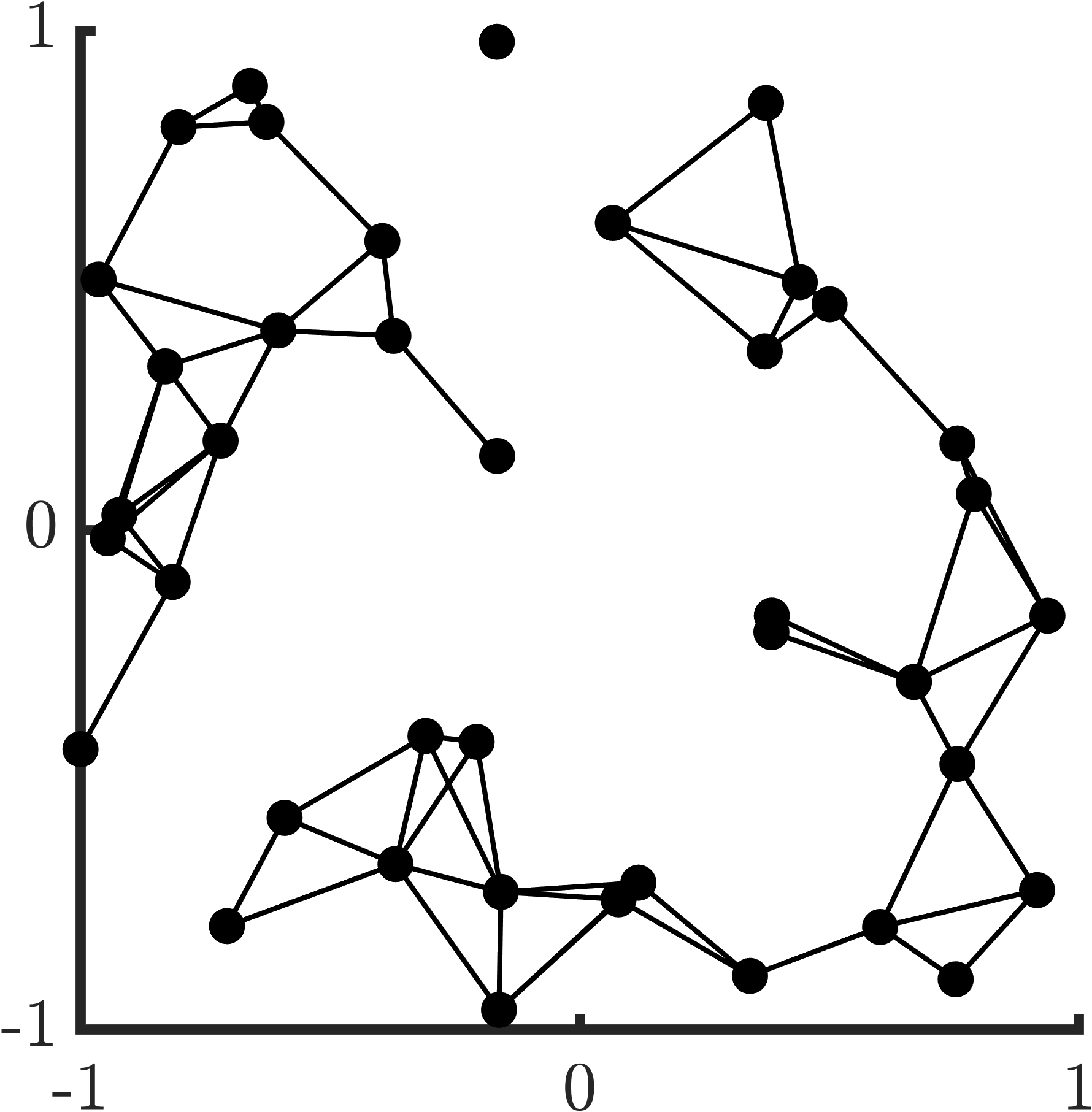}
        \subcaption{Plot of $\cG_{0.4}(\cX)$}
        \label{fig:eps-graph:3}
    \end{subfigure}
    \vspace{-1em}
    \caption{A figure demonstrating the formation of the graph $\cG_\epsilon(\cX)$, for different choices of $\epsilon$. In both figures, the black dots denote points in $\cX$, and black lines denote the edges in $E_\epsilon$. Notice how when $\epsilon = 0.56$, as in \ref{fig:eps-graph:2}, the $\cG$ has one connected component, whereas when $\epsilon = 0.4$, as in \ref{fig:eps-graph:3}, $\cG$ has several disconnected components.
    }
    \label{fig:eps-graph}
\end{figure}

\begin{definition}[Weighted $\epsilon$ graph]
    Let $\cG_{\epsilon}(\cX)$, $\Psi$, $p_0$ and $\lambda$ be given. We call the weighted $\epsilon$ graph, denoted by $\hat{ \mathcal{G}}_{\epsilon}(\cX)$, the graph $\mathcal{G}_{\epsilon}(\cX)$ 
    with edge weights given by $L_{g_{\Psi,\lambda}}(\ell_{x,y})$ where $(x,y) \in E_\epsilon$ $\ell_{x,y}$ is a straight line from $x$ to $y$.
 \end{definition}
 
    Similarly, we define the \emph{$K$-approximate weighted $\epsilon$ graph}, denoted by $\hat \cG_{\epsilon,K}(\cX)$, as the graph $\cG_{\epsilon}(\cX)$ with edge weights given by applying an integral quadrature\footnote{We use equispaced trapezoidal rule for our numerical experiments.} to $L_{g_{\Psi,\lambda}}(\ell_{x,y})$.
    
    Let $\hat \cG$ be an edge weighted graph with vertices $\cX$. Given two vertices $x,y \in  \hat \cG$ define the \emph{linear interpolating cost}, denoted $\hat L(x,y; \hat \cG)$, as the cost of the minimal path connecting $x$ to $y$ in $\hat \cG$. If $x$ and $y$ are not connected in $\hat \cG$, then define $\hat L_{n,K}(x,y; \hat \cG) = \infty$. The idea is that we may approximate $d_{\Psi,\lambda}(x,y)$ by computing $\hat L_{n,K}(x,y; \hat \cG)$. 
The following theorem is the main theoretical contribution of this paper and says that this approximation converges as the number of points in $\cX$ increases and $\epsilon$ decreases\footnote{This point of $\epsilon$ decreasing is somewhat technical, and not elaborated upon in this manuscript. It is summarized in Eqn. 2.6 \cite{davis2019approximating} and the surrounding discussion.}. This is true not only in value, but also in minimizing path, as the following theorem shows. Its proof is in Appendix \ref{sec:thm:conv-of-k-approx-lin-interp-cost-to-geodesic-cost}.
\begin{theorem}[Convergence of Linear Interpolation Costs to Riemannian Distance]
    \label{thm:conv-of-k-approx-lin-interp-cost-to-geodesic-cost}
    Let $\mathcal X_n$ be $n$ realizations 
    of a uniform random variable $\textsc{x}$ over $\Omega$. Then for any $x,y \in \Omega$ sequence $\paren{K_n}_{n = 1,\dots}$ so that $\lim_{n \to \infty}K_n = \infty$, and $\epsilon_n \to 0$ sufficiently quickly. Then the following results hold with probability 1 (over $\textsc{x}$). 
    \begin{enumerate}
        \item \textbf{Graph Becomes Connected} There is an $N$ so that for all $n \geq N$, there is a path from $x$ to $y$ in $\hat{\cG}_{\epsilon_n,K_n}(\cX_n \cup \set{x,y})$.
        \item \textbf{Convergence in Length} $\lim_{n{\to} \infty} \hat L_{n,K_n}(x,y;\hat{\cG}_{\epsilon_n{,}K_n}(\cX_n{\cup}\set{x{,}y})) {=}d_{\Psi{,}\lambda}(x{,}y)$.
        \item \textbf{Convergence in Geodesic} For any sequence of optimal paths $l_{\bv^{(n)}} \in \argmin \hat L_{n,K_n}(x,y)$, any subsequence of $\paren{l_{\bv^{(n)}}}_{n = 1,\dots}$ has a further subsequence of linear paths that converges uniformly to a limit path $\gamma \in \argmin L$ and of the discrete paths in the Hausdorff sense to the true generative geodesic, $S_\gamma$. Further, if the path is unique (up to reparameterization), then $\paren{l_{\bv^{(n)}}}_{n = 1,\dots}$ converges in Hausdorff distance to $S_\gamma$.
    \end{enumerate}
\end{theorem}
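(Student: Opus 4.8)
The plan is to reduce the bulk of the statement to the graph-based geodesic approximation theorem of \cite{davis2019approximating}, after first checking that our generative metric meets its regularity hypotheses, and then to separately account for (i) the extra error incurred by replacing exact edge weights with a quadrature, and (ii) the upgrade from convergence of values to convergence of minimizing paths. The key structural observation is that $g_{\Psi,\lambda}$ is conformal to the Euclidean metric: writing $c(x) \coloneqq (p_0+\lambda)/(p_\Psi(x)+\lambda)$, we have $g_{x,\Psi,\lambda}(u,v) = c(x)^2\, u\cdot v$. Since $p_\Psi$ is smooth and $\lambda > 0$, on the bounded domain $\Omega$ the factor $c$ is smooth and pinched between positive constants $c_{\min} \le c(x) \le c_{\max}$. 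Consequently $c_{\min}\norm{x-y} \le d_{\Psi,\lambda}(x,y) \le c_{\max}\norm{x-y}$, so $d_{\Psi,\lambda}$ is bi-Lipschitz equivalent to the Euclidean distance; by Hopf--Rinow the space is complete and minimizing geodesics between any $x,y$ exist. This comparison is exactly the regularity needed to apply the cited theorem.

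For \textbf{(1) Graph Becomes Connected} and \textbf{(2) Convergence in Length} I would proceed as follows. Connectivity is a property of the random geometric graph $\cG_{\epsilon_n}(\cX_n \cup \set{x,y})$: for $\epsilon_n$ chosen in the admissible scaling window of \cite{davis2019approximating} (their Eqn. 2.6), the uniform sample $\cX_n$ forms an $\epsilon_n$-net of $\Omega$ with probability $1$ for all large $n$, which forces a connecting path through vertices shadowing any continuous curve from $x$ to $y$. For the length, \cite{davis2019approximating} already gives $\hat L_n(x,y;\hat\cG_{\epsilon_n}) \to d_{\Psi,\lambda}(x,y)$ almost surely for the \emph{exactly} weighted graph; it remains to show the quadrature weights do not change the limit. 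On each edge $\ell_{x',y'}$ with $\norm{x'-y'} < \epsilon_n$ the integrand $t \mapsto c(\ell(t))\norm{y'-x'}$ is $C^2$ with derivatives bounded uniformly over $\bar\Omega$, so the trapezoidal error is $O(\epsilon_n^2/K_n^2)$ per unit Euclidean length; summing along any path of bounded Euclidean length (bounded by the Part (2) limit divided by $c_{\min}$) and using $K_n \to \infty$ shows $\abs{\hat L_{n,K_n} - \hat L_n} \to 0$, so the two share the limit $d_{\Psi,\lambda}(x,y)$.

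For \textbf{(3) Convergence in Geodesic} the argument is a compactness plus lower-semicontinuity scheme. Each optimal path $l_{\bv^{(n)}}$ is a piecewise-linear curve in $\bar\Omega$ from $x$ to $y$; since its $g$-length converges to $d_{\Psi,\lambda}(x,y) < \infty$ and $g$-length dominates $c_{\min}$ times Euclidean length, the Euclidean lengths are uniformly bounded. Reparameterizing each by constant speed on $[0,1]$ makes the family uniformly Lipschitz, hence equicontinuous and uniformly bounded; Arzel\`a--Ascoli then extracts from any subsequence a uniformly convergent further subsequence with limit $\gamma\colon[0,1]\to\bar\Omega$, $\gamma(0)=x$, $\gamma(1)=y$. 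Lower semicontinuity of $L_g$ under uniform convergence (valid since $c$ is continuous) gives $L_g(\gamma) \le \liminf L_g(l_{\bv^{(n)}}) = d_{\Psi,\lambda}(x,y)$, while $L_g(\gamma) \ge d_{\Psi,\lambda}(x,y)$ by definition of the infimum; hence $\gamma$ is a minimizer. Uniform convergence of the parameterized curves implies Hausdorff convergence of their traces to $S_\gamma$. When the geodesic is unique up to reparameterization, every subsequence has a further subsequence with the same Hausdorff limit $S_\gamma$, and the standard subsequence principle promotes this to convergence of the whole sequence.

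The main obstacle I expect is the joint control of the three scales $n\to\infty$, $\epsilon_n\to 0$, $K_n\to\infty$: the connectivity and net properties require $\epsilon_n$ not to decay below the random-geometric-graph threshold, the shadowing estimate proving the length upper bound requires $\epsilon_n \to 0$, and the quadrature estimate requires $K_n$ to grow fast enough relative to $\epsilon_n$ so that the cumulative edge error vanishes along paths whose number of edges grows with $n$. Verifying that an admissible coupled schedule exists, and that the error bounds are uniform over $\bar\Omega$ rather than merely pointwise, is where the real care is needed; the lower-semicontinuity step in Part (3), which only needs continuity of $c$, is comparatively standard once the length convergence of Part (2) is in hand.
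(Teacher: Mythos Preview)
Your overall scheme---verify that the conformal factor $c(x)=(p_0+\lambda)/(p_\Psi(x)+\lambda)$ is smooth and bounded away from zero on $\bar\Omega$, invoke \cite{davis2019approximating} for the exactly-weighted graph, and then control the additional quadrature error---matches the paper's proof of Parts (1) and (2) essentially step for step. The paper packages the exact-weight case as a separate proposition (citing Lemma~2.6 and Theorem~2.8 of \cite{davis2019approximating}) and the quadrature control as two short lemmas, but the content is the same as what you outline.

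For Part (3) you diverge. The paper does \emph{not} run Arzel\`a--Ascoli and lower semicontinuity directly; instead it argues that because the per-edge quadrature error is uniform over $\bar\Omega$ and optimal paths have uniformly bounded length, any minimizing sequence for $\hat L_{n,K_n}$ is simultaneously a minimizing sequence for the exact-weight cost $L_n$ (and conversely), and then invokes the path-convergence conclusion of Theorem~2.8 in \cite{davis2019approximating} as a black box. Your route is more self-contained and makes the compactness mechanism visible; the paper's route is shorter since Davis has already done that work in the exact-weight setting. Two small points to tighten in your write-up: (i) when you assert ``its $g$-length converges to $d_{\Psi,\lambda}(x,y)$'' you are using the \emph{true} length $L_g(l_{\bv^{(n)}})$, whereas Part (2) controls $\hat L_{n,K_n}(l_{\bv^{(n)}})$; the two agree in the limit by your own quadrature estimate applied to the particular path $l_{\bv^{(n)}}$, but say so explicitly; (ii) your bound on the Euclidean length of the optimal path (``the Part (2) limit divided by $c_{\min}$'') is mildly circular---replace it by the cost of any fixed competitor path, e.g.\ the straight segment from $x$ to $y$, which gives an a~priori upper bound independent of $n$.
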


\begin{figure}
    \centering
    \begin{subfigure}{.98\linewidth}
        \centering
        \includegraphics[width=.48\linewidth]{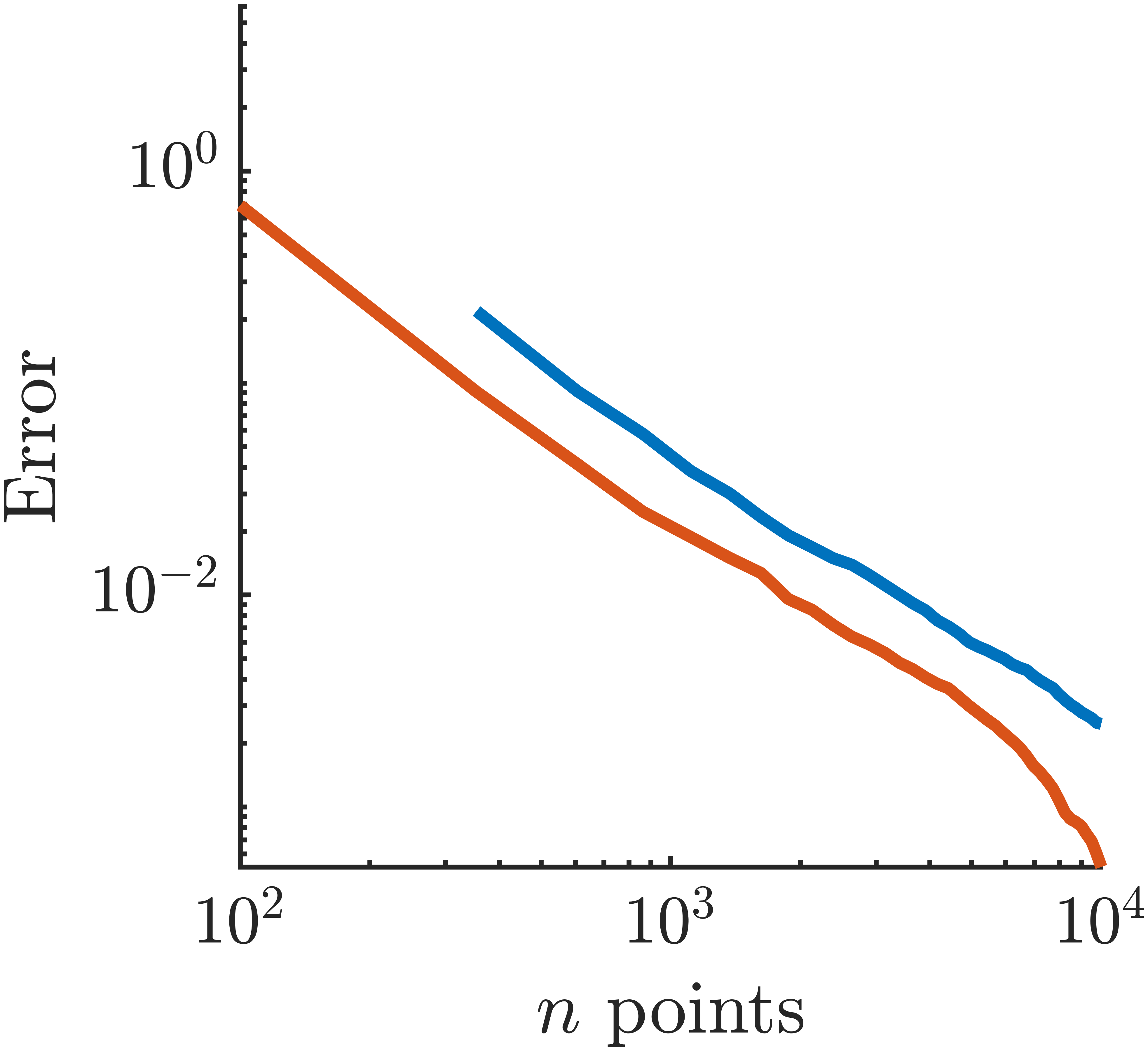}
    \end{subfigure}
    \vspace{-1em}
    \caption{Error analysis of the true geodesic length vs the length of the approximated geodesic as described in Theorem \ref{thm:conv-of-k-approx-lin-interp-cost-to-geodesic-cost}, in a log plot. The $x$ axis indicates the number of points and the $y$ axis indicates the absolute error in the geodesic approximation. The blue line uses points sampled uniformly from the ambient space, and the orange line uses points sampled from $p_\Psi$.}
    \label{fig:n-convergence}
\end{figure}
A visualization of the convergence described by Theorem \ref{thm:conv-of-k-approx-lin-interp-cost-to-geodesic-cost} is shown in Figure \ref{fig:n-convergence}. The statement in Theorem \ref{thm:conv-of-k-approx-lin-interp-cost-to-geodesic-cost} requires the data to be sampled from a uniform distribution, but we still numerically observe convergence even if the samples are not uniform. In fact, we observe faster convergence when the data is sampled from $p_\Psi$.
In Appendix~\ref{app:algorithm}, we provide details about the algorithm to compute generative distances and corresponding geodesics.

\begin{figure}[t]
\centering
\includegraphics[width=0.8\linewidth,trim=0 50 0 65, clip]{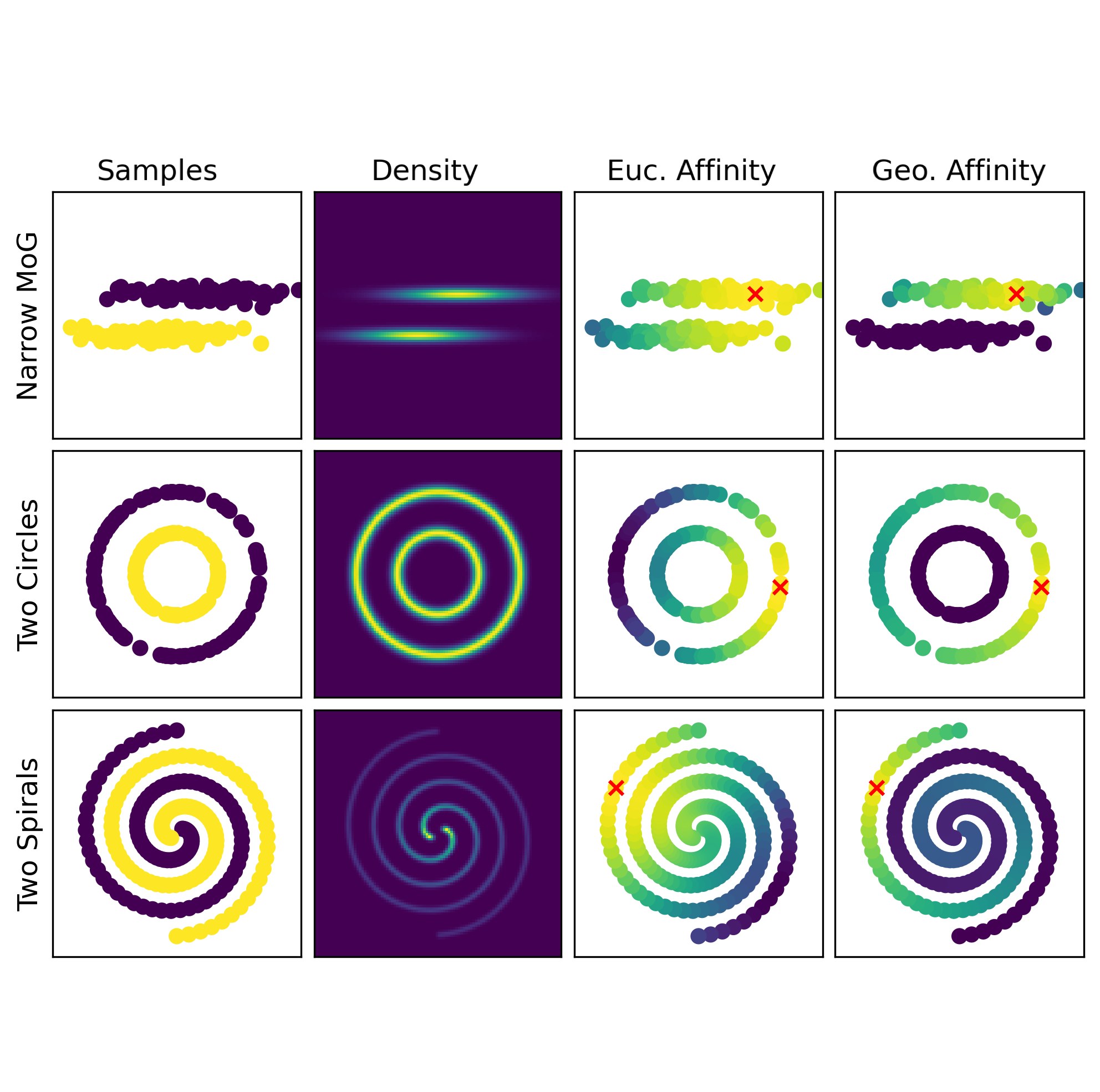}
\vspace{-3mm}
\caption{Euclidean and geodesic affinity visualization on various toy datasets. \textbf{First column} shows data samples, where color denotes the cluster each sample was generated from. \textbf{Second column} shows data density, where brighter color means higher density. \textbf{Third and fourth columns} show Euclidean and geodesic affinity, respectively, between the red cross and rest of the points. Brighter color means higher affinity.}
\label{fig:toy_viz}
\vspace{-3mm}
\end{figure}

\begin{figure}[t]
\centering
\includegraphics[width=.8\linewidth,trim=0 8 0 8, clip]{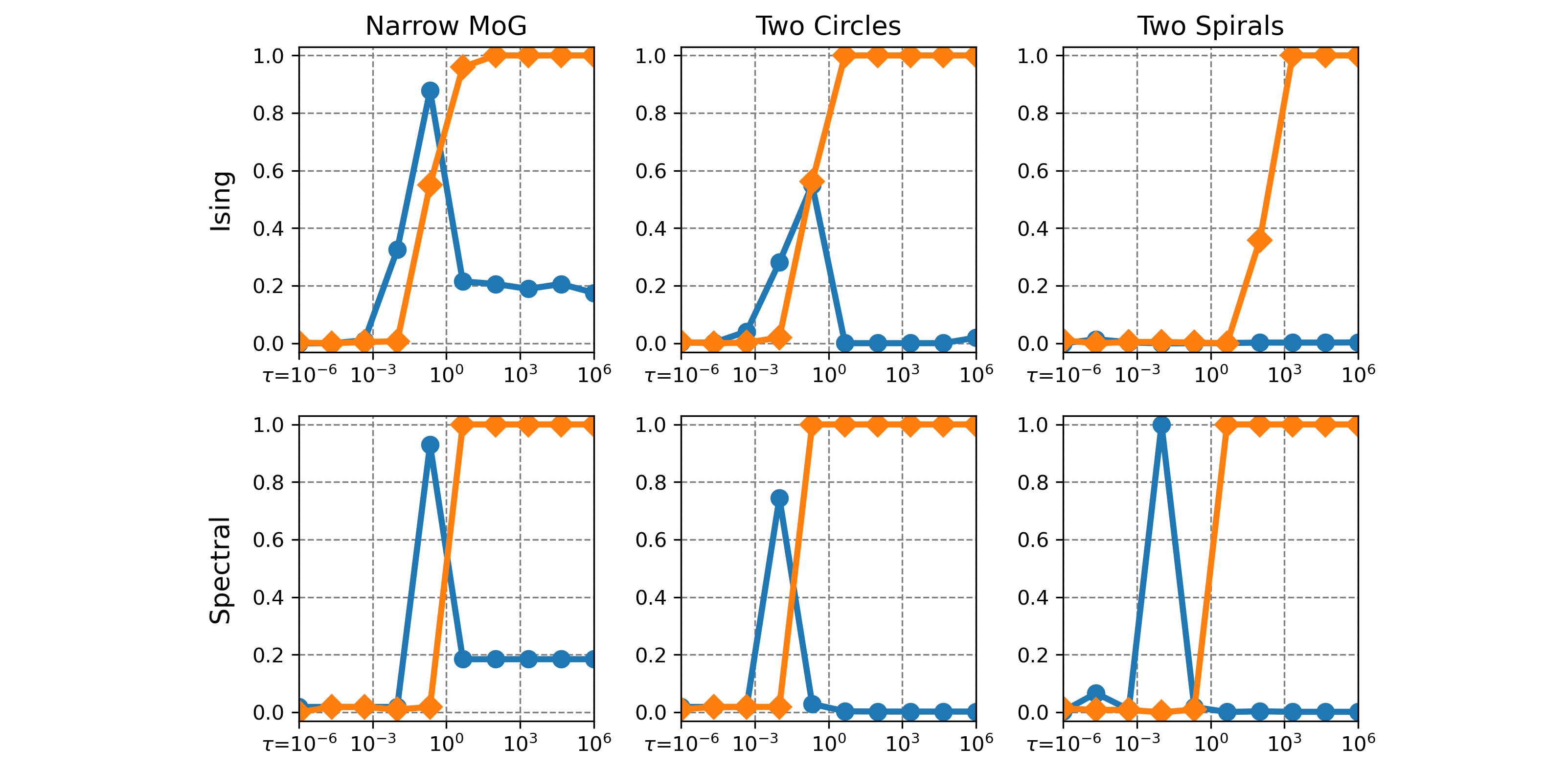}
\vspace{-4mm}
\caption{Clustering normalized mutual information (NMI) w.r.t. the temperature hyper-parameter $\tau$ using Euclidean and geodesic affinity matrices. Higher NMI is better. NMI = 0 means random cluster assignment and NMI = 1 means perfect clustering.}
\label{fig:toy_clustering}
\vspace{-3mm}
\end{figure}

{\section{Applications/Experiments}\label{sec:experiments}}

Detailed experiment settings can be found in Appendix \ref{append:settings}.

\subsection{Clustering with Geodesics} \label{sec:experiments_clustering}

In this section, we verify the effectiveness of the geodesic on three datasets: two parallel and axis-aligned MoG (Narror MoG), two concentric circles, and two spirals. Specifically, given a set of points $\{x_i\}$ from a dataset, we use the geodesic to construct an affinity matrix $A$ whose entries are defined as
    $A_{ij} \coloneqq \exp(-d_{\Psi,\lambda}(x_i,x_j) / \tau$
where $\tau$ is a temperature hyper-parameter to control the level of smoothness of $A$. Then the $(i,j)$-th entry of $A$ measures similarity between $x_i$ and $x_j$, with higher value indicating larger similarity under the Riemannian metric. We then use $A$ to run clustering algorithms. We consider two clustering methods: Ising kernel~\cite{kumagai2023ising} and spectral clustering. 
Figure \ref{fig:toy_viz} illustrates how geodesic affinity is able to provide robust clustering results. Indeed, compared to Euclidean affinity, we observe that geodesic affinity takes into account the geometric structure of the dataset.

In Figure \ref{fig:toy_clustering}, we see that clustering with geodesic affinity matrices out-perform clustering with Euclidean affinity on most of the datasets. Moreover, clustering with geodesic affinity is more robust to the choice of $\tau$. Specifically, Euclidean affinity achieves its peak performance only for a narrow subset $\tau \in (10^{-3},1)$ whereas geodesic affinity performs consistently well for all sufficiently large $\tau$.

\begin{figure}[t]
\centering
\includegraphics[width=0.8\linewidth,trim=0 0 0 5, clip]{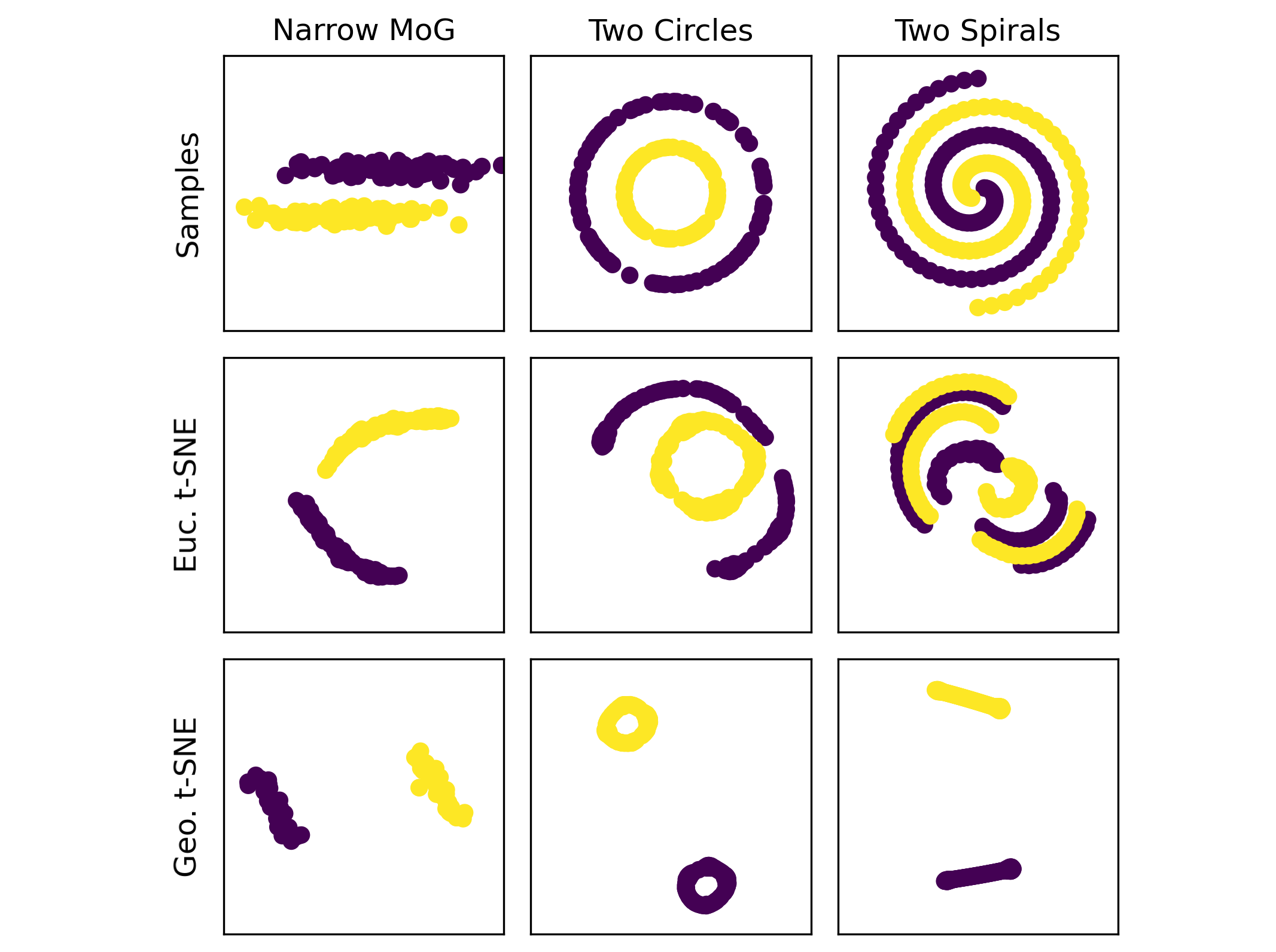}
\vspace{-3mm}
\caption{t-SNE with Euclidean and Riemannian metrics. Color denotes cluster label of each point.}
\label{fig:toy_tsne}
\vspace{-3mm}
\end{figure}

\subsection{Embedding with Geodesics} \label{sec:experiments_embedding}

We also visualize t-SNE embeddings computed w.r.t. the Euclidean metric and our Riemannian metric. Again, in Figure \ref{fig:toy_tsne}, we can observe that points on the same connected component, i.e., cluster, are nearby in the embedding space for the geodesic. In fact, t-SNE with the Riemannian metric always produces linearly separable embeddings, so we can achieve near-perfect classification with a linear classifier trained on the embeddings.

\subsection{Interpolating with Geodesics} \label{sec:experiments_interpolating}

Finally, we visualize geodesics approximated by our Riemmanian metric. In Figure \ref{fig:realnvp_inter}, we train a RealNVP \cite{dinh2017realnvp} on the two moons dataset, use the density estimated by RealNVP to calculate the geodesics. For baseline, we perform linear interpolation in the latent space of RealNVP. We observe that latent interpolation often produces convoluted paths whereas geodesics traverse the shortest path along the manifold.
\begin{figure}[t]
\centering
\includegraphics[width=.8\linewidth]{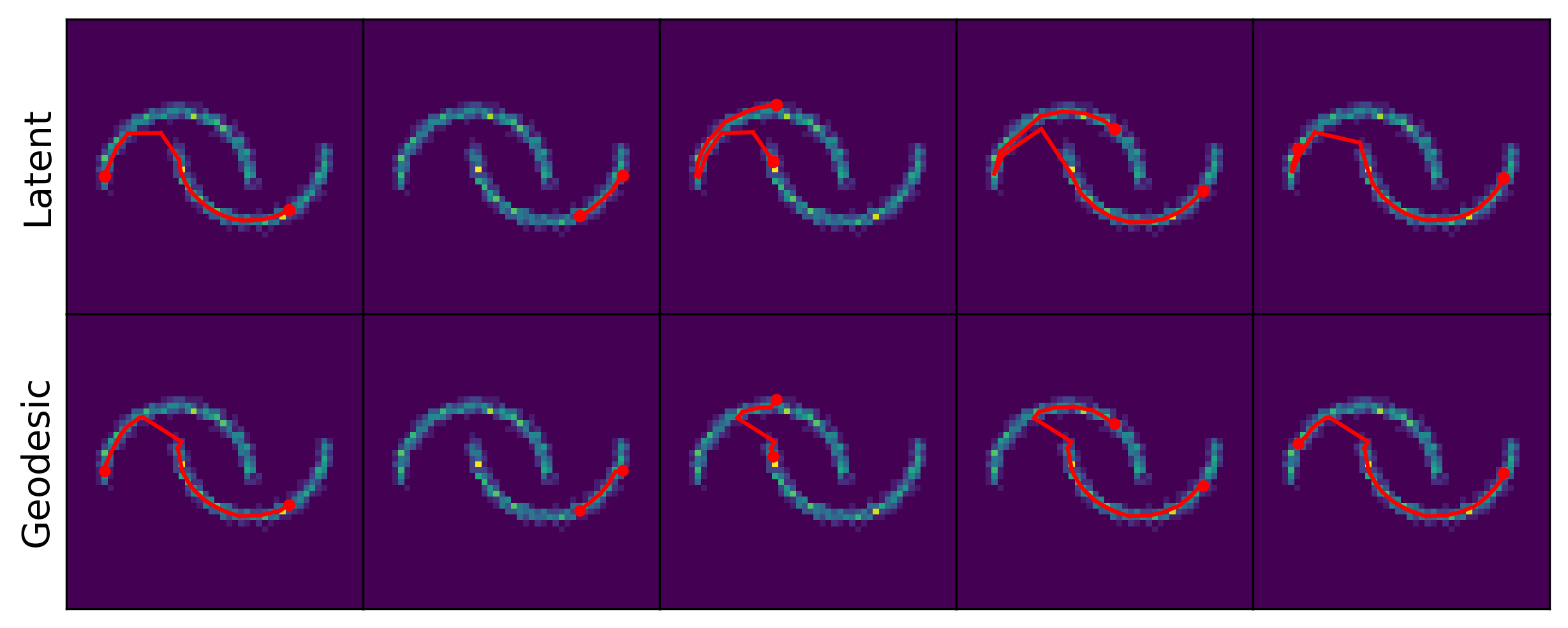}
\vspace{-4mm}
\caption{Interpolation on two moons. Each interpolation is illustrated by a red path.}
\label{fig:realnvp_inter}
\vspace{-5mm}
\end{figure}
\begin{figure}[b]
\centering
\vspace{-3mm}
\includegraphics[width=1.\linewidth,trim=0 0 0 5, clip]{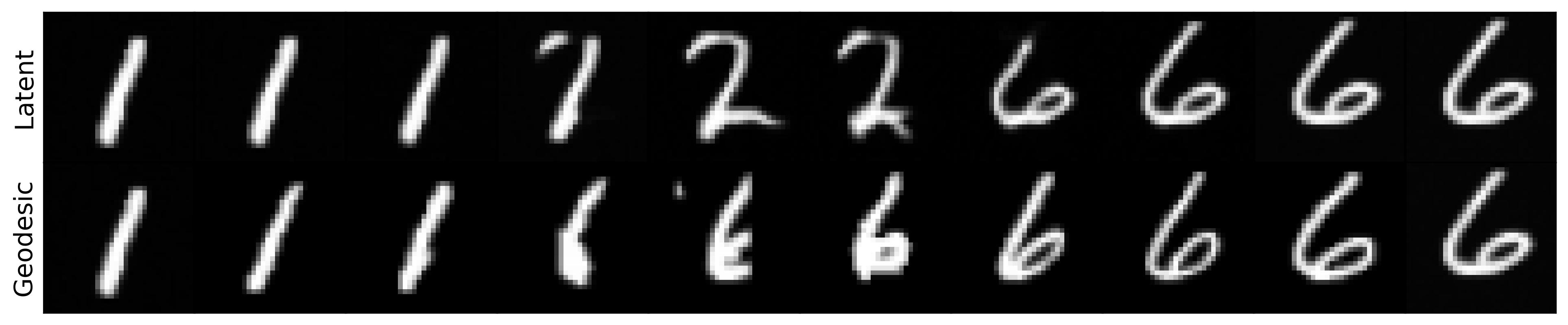} \\
\includegraphics[width=1.\linewidth]{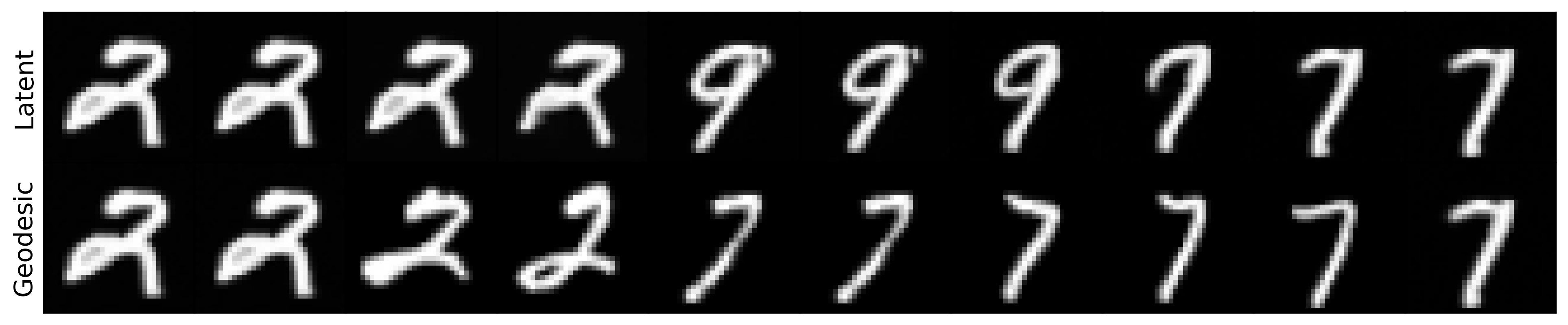} \\
\vspace{-3mm}
\caption{Interpolation on MNIST.}
\label{fig:mnist_inter}
\end{figure}
We also visualize geodesics between MNIST images in Figure \ref{fig:mnist_inter}, where the data density is approximated by a mixture of Gaussians with $2k$ components and variance $0.005^2$. For comparison, we also visualize interpolation in the latent space of a Consistency Trajectory Model \cite{kim2024ctm} which estimates diffusion probability flow ODE trajectories. We observe that the diffusion model often has sudden skips or artifacts when transitioning between digits of different identities. On the other hand, the geodesic always provides smooth transition between digits.

\section*{Statement of Contribution}
B. K. actively contributed to the discussion in the meetings, implemented and ran the experiments for Section 3.1-3.3 and wrote the corresponding text. 
M. P. developed the theory of the approach, wrote all theorems/lemmas and proofs, the text of Sections 1.1 and 2, the code behind Figures \ref{fig:eps-graph}, \ref{fig:comparison-of-parameters:a} and \ref{fig:n-convergence}, and wrote the pseudo-code for the geodesic computation.
J. C. Y. provided guidance and additional computation resources for the numerical experiments. 
E. S. setup and coordinated the team, helped to identify/conceptualize the metric, identified the applications in the experiments, reviewed the literature, wrote the Introduction section and proofread the manuscript.

\section*{Acknowledgements}

This work was partially supported by the National Research Foundation of Korea under Grant RS-2024-00336454. M. P. is partially supported by CAPITAL Services of Sioux Falls, South Dakota. E. S. is supported by the European Research Council (ERC)
under the European Union's Horizon 2020 research and innovation programme
(grant agreement n° 101021347, KeepOnLearning).

\bibliography{ref}

\begin{thebibliography}{28}
\providecommand{\natexlab}[1]{#1}
\providecommand{\url}[1]{\texttt{#1}}
\expandafter\ifx\csname urlstyle\endcsname\relax
  \providecommand{\doi}[1]{doi: #1}\else
  \providecommand{\doi}{doi: \begingroup \urlstyle{rm}\Url}\fi

\bibitem[Arvanitidis et~al.(2018)Arvanitidis, Hansen, and Hauberg]{arvanitidis2018latent}
Arvanitidis, G., Hansen, L.~K., and Hauberg, S.
\newblock {Latent Space Oddity: on the Curvature of Deep Generative Models}.
\newblock In \emph{ICLR}, 2018.

\bibitem[Arvanitidis et~al.(2021)Arvanitidis, Hauberg, and Sch{\"{o}}lkopf]{arvanitidis2021geometrically}
Arvanitidis, G., Hauberg, S., and Sch{\"{o}}lkopf, B.
\newblock {Geometrically Enriched Latent Spaces}.
\newblock In \emph{AISTATS}, 2021.

\bibitem[Arvanitidis et~al.(2022)Arvanitidis, Georgiev, and Sch{\"{o}}lkopf]{arvanitidis2022prior}
Arvanitidis, G., Georgiev, B.~M., and Sch{\"{o}}lkopf, B.
\newblock {A Prior-Based Approximate Latent Riemannian Metric}.
\newblock In \emph{AISTATS}, 2022.

\bibitem[Berthelot et~al.(2019)Berthelot, Raffel, Roy, and Goodfellow]{berthelot2019inter}
Berthelot, D., Raffel, C., Roy, A., and Goodfellow, I.
\newblock Understanding and improving interpolation in autoencoders via an adversarial regularizer.
\newblock In \emph{ICLR}, 2019.

\bibitem[Chen et~al.(2018)Chen, Klushyn, Kurle, Jiang, Bayer, and van~der Smagt]{chen2018metrics}
Chen, N., Klushyn, A., Kurle, R., Jiang, X., Bayer, J., and van~der Smagt, P.
\newblock {Metrics for Deep Generative Models}.
\newblock In \emph{AISTATS}, 2018.

\bibitem[Choi et~al.(2022)Choi, Lee, Yoon, Park, Hwang, and Kang]{choi2022do}
Choi, J., Lee, J., Yoon, C., Park, J.~H., Hwang, G., and Kang, M.
\newblock {Do Not Escape From the Manifold: Discovering the Local Coordinates on the Latent Space of GANs}.
\newblock In \emph{ICLR}, 2022.

\bibitem[Crauser et~al.(1998)Crauser, Mehlhorn, Meyer, and Sanders]{crauser1998parallelization}
Crauser, A., Mehlhorn, K., Meyer, U., and Sanders, P.
\newblock {A Parallelization of Dijkstra's Shortest Path Algorithm}.
\newblock In \emph{Mathematical Foundations of Computer Science}, 1998.

\bibitem[Davis \& Sethuraman(2019)Davis and Sethuraman]{davis2019approximating}
Davis, E. and Sethuraman, S.
\newblock Approximating geodesics via random points.
\newblock \emph{The Annals of Applied Probability}, 29\penalty0 (3):\penalty0 1446--1486, 2019.

\bibitem[Dinh et~al.(2017)Dinh, Sohl-Dickstein, and Bengio]{dinh2017realnvp}
Dinh, L., Sohl-Dickstein, J., and Bengio, S.
\newblock {Density Estimation using Real NVP}.
\newblock \emph{ICLR}, 2017.

\bibitem[Issenhuth et~al.(2023)Issenhuth, Tanielian, Mary, and Picard]{issenbuth2023unveiling}
Issenhuth, T., Tanielian, U., Mary, J., and Picard, D.
\newblock {Unveiling the Latent Space Geometry of Push-Forward Generative Models}.
\newblock In \emph{ICML}, 2023.

\bibitem[Kim et~al.(2024)Kim, Lai, Mitsufuji, and Ermon]{kim2024ctm}
Kim, D., Lai, C., Mitsufuji, Y., and Ermon, S.
\newblock {Consistency Trajectory Models: Learning Probability Flow ODE Trajectory of Diffusion}.
\newblock \emph{ICLR}, 2024.

\bibitem[Kumagai et~al.(2023)Kumagai, Komatsu, Sato, and Kobayashi]{kumagai2023ising}
Kumagai, M., Komatsu, K., Sato, M., and Kobayashi, H.
\newblock Ising-based kernel clustering.
\newblock \emph{Algorithms}, 16\penalty0 (4):\penalty0 214, 2023.

\bibitem[Laine(2018)]{laine2018feature}
Laine, S.
\newblock Feature-based metrics for exploring the latent space of generative models.
\newblock In \emph{ICLR Workshop}, 2018.

\bibitem[Lee(2006)]{lee2006riemannian}
Lee, J.~M.
\newblock \emph{Riemannian manifolds: an introduction to curvature}, volume 176.
\newblock Springer Science \& Business Media, 2006.

\bibitem[Lee(2013)]{lee2013smooth}
Lee, J.~M.
\newblock \emph{Smooth manifolds}.
\newblock Springer, 2013.

\bibitem[Lee \& Park(2023)Lee and Park]{lee2023explicit}
Lee, Y. and Park, F.~C.
\newblock {On Explicit Curvature Regularization in Deep Generative Models}.
\newblock \emph{CoRR}, 2023.

\bibitem[Lee et~al.(2022{\natexlab{a}})Lee, Kim, Choi, and Park]{lee2022statistical}
Lee, Y., Kim, S., Choi, J., and Park, F.~C.
\newblock {A Statistical Manifold Framework for Point Cloud Data}.
\newblock In \emph{ICML}, 2022{\natexlab{a}}.

\bibitem[Lee et~al.(2022{\natexlab{b}})Lee, Yoon, Son, and Park]{lee2022regularized}
Lee, Y., Yoon, S., Son, M., and Park, F.~C.
\newblock {Regularized Autoencoders for Isometric Representation Learning}.
\newblock In \emph{ICLR}, 2022{\natexlab{b}}.

\bibitem[Michelis \& Becker(2021)Michelis and Becker]{michelis2021linear}
Michelis, M.~Y. and Becker, Q.
\newblock On linear interpolation in the latent space of deep generative models.
\newblock In \emph{ICLR Workshop}, 2021.

\bibitem[Ortega-Arranz et~al.(2013)Ortega-Arranz, Torres, Llanos, and Gonzalez-Escribano]{ortega2013new}
Ortega-Arranz, H., Torres, Y., Llanos, D.~R., and Gonzalez-Escribano, A.
\newblock {A New GPU-Based Approach to the Shortest Path Problem}.
\newblock In \emph{International Conference on High Performance Computing \& Simulation}, 2013.

\bibitem[Park et~al.(2023)Park, Kwon, Choi, Jo, and Uh]{park2023understanding}
Park, Y., Kwon, M., Choi, J., Jo, J., and Uh, Y.
\newblock {Understanding the Latent Space of Diffusion Models through the Lens of Riemannian Geometry}.
\newblock In \emph{NeurIPS}, 2023.

\bibitem[Ramesh et~al.(2019)Ramesh, Choi, and LeCun]{ramesh2019spectral}
Ramesh, A., Choi, Y., and LeCun, Y.
\newblock A spectral regularizer for unsupervised disentanglement.
\newblock In \emph{ICML}, 2019.

\bibitem[Shao et~al.(2018)Shao, Kumar, and Fletcher]{shao2018riemannian}
Shao, H., Kumar, A., and Fletcher, P.~T.
\newblock {The Riemannian Geometry of Deep Generative Models}.
\newblock In \emph{CVPR Workshop}, 2018.

\bibitem[Song et~al.(2023)Song, Keller, Sebe, and Welling]{song2023latent}
Song, Y., Keller, A., Sebe, N., and Welling, M.
\newblock {Latent Traversals in Generative Models as Potential Flows}.
\newblock In \emph{ICML}, 2023.

\bibitem[Struski et~al.(2023)Struski, Sadowski, Danel, Tabor, and Podolak]{struski2023feature}
Struski, L., Sadowski, M., Danel, T., Tabor, J., and Podolak, I.~T.
\newblock Feature-based interpolation and geodesics in the latent space of generative models.
\newblock \emph{IEEE Transactions on Neural Networks and Learning Systems}, 2023.

\bibitem[Team(2023)]{rapids2023}
Team, R.~D.
\newblock \emph{RAPIDS: Libraries for End to End GPU Data Science}, 2023.
\newblock URL \url{https://rapids.ai}.

\bibitem[Zhu et~al.(2021)Zhu, Feng, Shen, Zhao, Zha, Zhou, and Chen]{zhu2021low}
Zhu, J., Feng, R., Shen, Y., Zhao, D., Zha, Z., Zhou, J., and Chen, Q.
\newblock {Low-Rank Subspaces in GANs}.
\newblock In \emph{NeurIPS}, 2021.

\bibitem[Zhu et~al.(2020)Zhu, Xu, and Tao]{zhu2020vp}
Zhu, X., Xu, C., and Tao, D.
\newblock Learning disentangled representations with latent variation predictability.
\newblock In \emph{ECCV}, 2020.

\end{thebibliography}
\bibliographystyle{icml2024}

\newpage
\appendix
\onecolumn

\section{Proofs}
\subsection{Proof of Theorem \ref{thm:conv-of-k-approx-lin-interp-cost-to-geodesic-cost}}
\label{sec:thm:conv-of-k-approx-lin-interp-cost-to-geodesic-cost}
The proof of Theorem \ref{thm:conv-of-k-approx-lin-interp-cost-to-geodesic-cost} has two parts. In the first part we prove that the linear interpolating cost (defined below) converges to $d_{\Psi,\lambda}$. Next, we show that the linear interpolating cost may be approximated again with $\hat L_{\Psi,\lambda}$ using a quadrature.

\begin{definition}[Linear Interpolating Cost]
    Let $x,y \in \mathcal X$ and $V_n(x,y)$ notate the set of all paths in $\mathcal{G}_n$ from $x$ to $y$ of the form 
    $
        V_n(x,y) \coloneqq \set{(v_0,v_1,\dots,v_{m-1},v_m) | m > 0, v_0 = x, v_m = y, v_1,\dots,v_{m-1} \in \mathcal X}.
        $
    If $V_n(x,y)$ is nonempty, we say that 
    \emph{Linear Interpolating Cost} from $x,y$ is
    \begin{align}
        L_n(x,y) \coloneqq \min_{(v_0,\dots,v_m) \in V_n(x,y)} \sum_{i = 0}^{m-1} L(\overline{(v_i,v_{i+1})};\Psi,\lambda).
    \end{align}
    If $V_n(x,y)$ is empty, we say that $L_n(x,y) = +\infty$.
\end{definition}

Before proceeding to the Proof of Theorem \ref{thm:conv-of-k-approx-lin-interp-cost-to-geodesic-cost}, we first prove the following proposition about $L_n$.

\begin{proposition}[Convergence of Linear Interpolation Costs to Geodesic Cost]
    \label{prop:conv-of-linear-interp-cost}
    Let $\paren{\epsilon_n}_{i = 1,\dots}$ satisfy the decay rate assumption. Let $\mathcal X_n$ be $n$ realizations 
    of a uniform random variable $\textsc{x}$ over $\Omega$. Then the following results hold with probability 1.
    \begin{enumerate}
        \item For any $x,y \in \mathcal X_n$, $V_n(x,y)$ is non-empty.
        \item $\lim_{n \to \infty} L_n(x,y) = d_{\Psi,\lambda}(x,y)$.
        \item For any sequence of optimal paths $l_{\bv^{(n)}} \in \argmin L_n(x,y)$ and subsequence of $\paren{l_{\bv^{(n)}}}_{n = 1,\dots}$ has a further subsequence of linear paths that converges uniformly to a limit path $\gamma \in \argmin L$ and of the discrete paths in the Hausdorff sense to $S_\gamma$ an optimal path. Further, if the path is unique (up to reparameterization), then $\paren{l_{\bv^{(n)}}}_{n = 1,\dots}$ converges in Hausdorff distance to $S_\gamma$.
    \end{enumerate}
\end{proposition}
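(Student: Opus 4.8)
The plan is to exploit that $g_{\Psi,\lambda}$ is a \emph{conformal} (flat) metric with conformal factor $c(x) \coloneqq \paren{\frac{p_0 + \lambda}{p_\Psi(x) + \lambda}}$, and to reduce the statement to the random-geodesic machinery of \cite{davis2019approximating}. The first observation is that, because $\lambda > 0$, $p_\Psi \ge 0$ is smooth, and $\Omega$ is bounded, the factor $c$ is smooth and \emph{uniformly bounded above and below by positive constants}: indeed $c(x) \le (p_0 + \lambda)/\lambda$ and $c(x) \ge (p_0 + \lambda)/(\sup_\Omega p_\Psi + \lambda) > 0$. Consequently $c_{\min}\norm{u} \le \norm{u}_{g_x} \le c_{\max}\norm{u}$ pointwise, so $d_{\Psi,\lambda}$ is bi-Lipschitz equivalent to the Euclidean distance $d_2$, and for any straight segment $\ell_{x,y}$ the edge weight $L_{g_{\Psi,\lambda}}(\ell_{x,y}) = \int_0^1 c(\ell_{x,y}(t))\norm{\ell_{x,y}'(t)}\,dt$ is comparable to $\norm{x-y}$. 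This continuity and non-degeneracy of $c$ are exactly the hypotheses under which \cite{davis2019approximating} establish convergence of shortest $\epsilon$-graph paths to Riemannian geodesics; the bulk of the proof is verifying that each of the three claims transfers, which I carry out in order below.

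For claim (1), I would prove the stronger statement that, with probability $1$, $\cG_{\epsilon_n}(\cX_n)$ is eventually connected. The key is a covering lemma: under the decay-rate assumption on $(\epsilon_n)$ (cf.\ Eqn.\ 2.6 of \cite{davis2019approximating}), the uniform samples a.s.\ eventually form a $\delta_n$-net of $\Omega$ with $\delta_n < \epsilon_n/3$. This follows by partitioning $\Omega$ into cells of diameter $\delta_n$, bounding the probability that a fixed cell receives no sample by $(1 - c\,\delta_n^{d})^{n}$ (with $d$ the ambient dimension and $c>0$ fixed), taking a union bound over the $O(\delta_n^{-d})$ cells, and applying Borel--Cantelli. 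Given such a net and any $x,y \in \Omega$, discretizing the straight segment from $x$ to $y$ finely and replacing each node by a nearby net point produces a chain whose consecutive points are within $\epsilon_n$, hence a path in $\cG_{\epsilon_n}$; this shows $V_n(x,y)$ is nonempty.

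For claim (2), the \emph{lower bound is immediate}: any element of $V_n(x,y)$ is a concatenation of straight segments, i.e.\ an admissible piecewise-linear curve from $x$ to $y$, and by construction its graph cost equals its exact $g_{\Psi,\lambda}$-length, so $L_n(x,y) \ge \inf_{\gamma \in \Omega(x,y)} L_{g_{\Psi,\lambda}}(\gamma) = d_{\Psi,\lambda}(x,y)$. For the matching \emph{upper bound} I would fix $\eta > 0$, take a near-minimizing $C^1$ curve $\gamma$ with $L_{g_{\Psi,\lambda}}(\gamma) \le d_{\Psi,\lambda}(x,y) + \eta$, and approximate it by a polygon through net points as in claim (1). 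Because $c$ is Lipschitz on the bounded set $\Omega$ and $\gamma$ is $C^1$, the $g$-length of the polygonal approximation converges to $L_{g_{\Psi,\lambda}}(\gamma)$ as the mesh (controlled by $\epsilon_n \to 0$) shrinks; hence $\limsup_n L_n(x,y) \le d_{\Psi,\lambda}(x,y) + \eta$, and letting $\eta \to 0$ closes the gap.

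For claim (3), I would parameterize each optimal graph path $l_{\bv^{(n)}}$ at constant $g$-speed on $[0,1]$. By claim (2) their lengths are bounded, and since they lie in the bounded set $\Omega$ with Euclidean speed $\le L_n/c_{\min}$, the family is uniformly Lipschitz and uniformly bounded; Arzel\`a--Ascoli then yields, along any subsequence, a further subsequence converging uniformly to some $\gamma$. Lower semicontinuity of $L_{g_{\Psi,\lambda}}$ under uniform convergence combined with claim (2) forces $L_{g_{\Psi,\lambda}}(\gamma) \le \lim L_n(x,y) = d_{\Psi,\lambda}(x,y)$, so $\gamma \in \argmin L$; uniform convergence of the parameterized curves then upgrades to Hausdorff convergence of their images to $S_\gamma$. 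If the minimizer is unique up to reparameterization, the standard subsequence argument (every subsequence has a sub-subsequence converging to the same $S_\gamma$) gives convergence of the full sequence. The main obstacle I anticipate is the \emph{upper bound in claim (2)}: one must simultaneously control the net fineness (a probabilistic, $n$-dependent quantity) and the deterministic error of polygonal length approximation of a curve under a variable conformal factor, and ensure the decay-rate assumption makes these two scales compatible so the double limit behaves; the remaining pieces are compactness and semicontinuity arguments that are robust once the metric equivalence is in hand.
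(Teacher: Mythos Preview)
Your proposal is correct and follows the same high-level strategy as the paper: recognize that $g_{\Psi,\lambda}$ is conformally flat with a smooth, strictly positive, bounded conformal factor $c(x) = (p_0+\lambda)/(p_\Psi(x)+\lambda)$, and feed this into the random-graph geodesic framework of \cite{davis2019approximating}. The difference is one of packaging. The paper simply observes that $M(x) \coloneqq c(x)$ meets the hypotheses of Lemma~2.6 in \cite{davis2019approximating}, which in turn yields the structural conditions (Lip), (Hilb), (TrIneq), (Pythag) needed for their Theorem~2.8; points (1)--(3) are then quoted verbatim from the conclusions of that theorem. You instead unroll those conclusions into a self-contained sketch: a covering/Borel--Cantelli argument for eventual connectivity, a direct lower bound plus polygonal-through-net-points upper bound for the length convergence, and Arzel\`a--Ascoli with lower semicontinuity of $L_{g_{\Psi,\lambda}}$ for the geodesic convergence. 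Your version makes explicit where the decay-rate assumption is actually consumed (the two-scale compatibility you flag at the end) and why the uniform positivity of $c$ is doing the work; the paper's version is shorter and avoids re-deriving what is already black-boxed in \cite{davis2019approximating}.
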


\begin{proof}
    The first point follows from the paragraph at the end of Pg. 5 \cite{davis2019approximating}.

    For all $x$ and $\lambda > 0$, $\frac{p_0 + \lambda}{p_\Psi(x) + \lambda}$ is positive, hence 
    Lemma 2.6 \cite{davis2019approximating} applies by taking $M(x) \coloneqq \frac{p_0 + \lambda}{p_\Psi(x) + \lambda}$. 
    Hence, (Lip), (Hilb), (TrIneq) and (Pythag) all apply for $\alpha > 1$. Therefore the full consequence 
    of Theorem 2.8 \cite{davis2019approximating} apply. 
    
    The second point follows from equation 
    $\lim_{n \to \infty} \min_{\gamma \in \Omega^l_n(x,t)}L_n(x,y) = \min_{\gamma \in \Omega(x,y)}L(\gamma; \Psi,\lambda)$. 
    But, we have that $\min_{\gamma \in \Omega(x,y)}L(\gamma; \Psi,\lambda) = d_{\Psi,\lambda}(x,y)$ from the work above.

    Finally, the third point follows from the final three sentences in the conclusion of Theorem 2.8 \cite{davis2019approximating}.
\end{proof}

Next we prove two lemmas that show that $L_n$ and $\hat L_{n,K}$ are close, and become closer as $n,K \to \infty$.

\begin{lemma}[Approximating Edge Weights]
    \label{lem:approx-edge-weights}
    Let $\cG_n$ be a $\lambda$ weighted $\epsilon$ graph with edge $(x,y)$. Then for any $\delta > 0$, 
    there is a $K \in \bbN$ so that 
    \begin{align}
        \abs{L(\overline{(x,y)};\Psi,\lambda) - \frac{\norm{y - x}_2}{K}\sum_{i = 0}^{K-1}\frac{p_0 + \lambda}{p_\Psi\paren{\paren{1 - \frac iK}x + \frac iK y} + \lambda}} < \delta
    \end{align}
\end{lemma}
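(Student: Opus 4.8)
The plan is to recognize the quantity inside the absolute value as the discrepancy between a definite integral and its left-endpoint Riemann sum, and then to invoke the elementary convergence of Riemann sums for a continuous integrand. First I would unpack $L(\overline{(x,y)};\Psi,\lambda)$ using the definitions in Eqns.~\ref{eqn:length-def} and \ref{eqn:g-lambda-def}. Parametrizing the edge by the straight line $\ell_{x,y}(t) = (1-t)x + ty$ on $t \in [0,1]$, we have $\ell_{x,y}'(t) = y - x$, so the $g_{\Psi,\lambda}$-norm of the velocity is $\norm{\ell_{x,y}'(t)}_{g_{\Psi,\lambda}} = \frac{p_0 + \lambda}{p_\Psi(\ell_{x,y}(t)) + \lambda}\,\norm{y - x}_2$. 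Setting $f(t) \coloneqq \frac{p_0 + \lambda}{p_\Psi((1-t)x + ty) + \lambda}$, the definition of length then gives $L(\overline{(x,y)};\Psi,\lambda) = \norm{y - x}_2 \int_0^1 f(t)\,dt$, while the subtracted sum is precisely $\norm{y-x}_2$ times the left-endpoint Riemann sum $\frac1K\sum_{i=0}^{K-1} f(i/K)$ of the very same integrand on the uniform partition of $[0,1]$ into $K$ pieces. Thus the quantity to be bounded equals $\norm{y-x}_2\,\abs{\int_0^1 f(t)\,dt - \frac1K\sum_{i=0}^{K-1} f(i/K)}$.

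Next I would check that $f$ is well-behaved on $[0,1]$. Because $p_\Psi \geq 0$ pointwise and $\lambda > 0$, the denominator obeys $p_\Psi + \lambda \geq \lambda > 0$, so $f$ is bounded and never hits a singularity; since $p_\Psi$ is continuous (indeed smooth), $f$ is continuous on the compact interval $[0,1]$, hence uniformly continuous by Heine--Cantor. If $p_\Psi$ is moreover smooth, $f$ is $C^\infty$ and in particular Lipschitz, which would additionally yield the explicit rate $O(1/K)$, though only uniform continuity is needed for the stated existence of $K$.

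Then I would bound the Riemann-sum error in the textbook manner. Writing $\int_0^1 f = \sum_{i=0}^{K-1}\int_{i/K}^{(i+1)/K} f(t)\,dt$ and subtracting the piecewise-constant sum term by term, the error is at most $\sum_{i=0}^{K-1}\int_{i/K}^{(i+1)/K}\abs{f(t) - f(i/K)}\,dt \leq \omega_f(1/K)$, where $\omega_f$ is the modulus of continuity of $f$ and I used that $\abs{t - i/K} \leq 1/K$ throughout the $i$-th subinterval. Uniform continuity gives $\omega_f(1/K) \to 0$ as $K \to \infty$. For the conclusion, if $x = y$ the whole left-hand side is identically $0$ and any $K$ works; otherwise I would pick $K$ large enough that $\omega_f(1/K) < \delta/\norm{y-x}_2$, which makes the product strictly smaller than $\delta$.

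I do not anticipate a genuine obstacle: the statement is, at bottom, the convergence of left Riemann sums to an integral. The only points needing a little care are isolating the scalar factor $\norm{y-x}_2$ together with the degenerate case $x=y$, and observing that the hypothesis $\lambda > 0$ (combined with $p_\Psi \geq 0$) is exactly what keeps the integrand bounded and continuous across all of $[0,1]$, so that no singularity of $f$ can spoil the quadrature.
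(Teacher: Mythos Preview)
Your proposal is correct and follows essentially the same approach as the paper: both identify $L(\overline{(x,y)};\Psi,\lambda)$ as a one-dimensional integral along the segment, recognize the sum as its left-endpoint Riemann approximation, and invoke continuity of the integrand (guaranteed by $\lambda>0$) to conclude. The only cosmetic difference is that the paper uses the arclength parametrization on $[0,\norm{y-x}_2]$ whereas you parametrize on $[0,1]$ and factor out $\norm{y-x}_2$; your version is in fact slightly more careful in handling the degenerate case $x=y$ and in spelling out the modulus-of-continuity bound.
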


\begin{proof}
    For $x,y \in \Omega$, the unit-speed parameterization of the straight-line path connecting 
    $x$ to $y$ is given by
    \begin{align*}
        \gamma(t) \coloneqq \paren{1 - \frac{t}{\norm{y-x}_2}}x + \frac{t}{\norm{y-x}_2}y, \quad \dot \gamma(t) = \frac{y - x}{\norm{y-x}_2}.
    \end{align*}
    Hence, we may write
    \begin{align}
        L(\overline{(x,y)};\Psi,\lambda) &= \int^{\norm{y-x}_2}_{0}f\paren{\paren{1 - \frac{t}{\norm{y-x}_2}}x + \frac{t}{\norm{y-x}_2}y, \frac{y - x}{\norm{y-x}_2};\lambda}dt \nonumber\\
        &= \int^{\norm{y-x}_2}_0\frac{p_0 + \lambda}{p_\Psi\paren{\paren{1 - \frac{t}{\norm{y-x}_2}}x + \frac{t}{\norm{y-x}_2}y} + \lambda} dt. \label{eqn:integral-expansion-f}
    \end{align}
    For $\Delta t \coloneqq \frac{\norm{y - x}_2}{K}$, the $K$-piece left Riemann sum of the r.h.s. 
    of \ref{eqn:integral-expansion-f} is 
    \begin{align}
        \sum_{i = 0}^{K-1}f(\gamma(\Delta t i), \dot \gamma(\Delta t i);\Psi,\lambda)\Delta t = \frac{\norm{y - x}_2}{K}\sum_{i = 0}^{K-1}\frac{p_0 + \lambda}{p_\Psi\paren{\paren{1 - \frac iK}x + \frac iK y} + \lambda}
    \end{align}
    The quantity in the integrand of Eqn. \ref{eqn:integral-expansion-f} is continuous as a 
    function of $t$, and so from standard calculus results there is some $K$ so that 
    \begin{align}
        \abs{L(\overline{(x,y)};\Psi,\lambda) - \frac{\norm{y - x}_2}{K}\sum_{i = 0}^{K-1}\frac{p_0 + \lambda}{p_\Psi\paren{\paren{1 - \frac iK}x + \frac iK y} + \lambda}} < \delta.
    \end{align}
\end{proof}

\begin{lemma}[Approximating Linear Interpolating Cost]
    \label{lem:approx-lin-interp-cost}
    Let $\cX$ and $\lambda$ be given, then for any $\delta$ there is a $K \in \bbN$ such that for all $x,y\in \cX$,
    \begin{align}
        \abs{L_n(x,y) - \hat L_{n,K}(x,y)} < \delta.
    \end{align}
\end{lemma}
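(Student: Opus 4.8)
The plan is to reduce the claim to a uniform control of the per-edge quadrature error and then push that error through the shortest-path (min-of-sums) operation. The first observation is that $L_n(x,y)$ and $\hat L_{n,K}(x,y)$ are both computed over the \emph{same} graph $\cG_n$: they share the identical vertex set $\cX$ and identical edge set, and differ only in the weight attached to each edge $(x',y')$, namely the exact length $L(\overline{(x',y')};\Psi,\lambda)$ versus its $K$-point quadrature. Since $\cX$ is a fixed finite set, $\cG_n$ has only finitely many edges, which is what makes a single uniform $K$ possible.

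First I would fix the per-edge tolerance. Set $N \coloneqq \abs{\cX}$ and choose $\eta \coloneqq \delta/N$. For each edge $e$ of $\cG_n$, Lemma~\ref{lem:approx-edge-weights}—or, more precisely, the convergence of the Riemann-sum quadrature to the integral in Eqn.~\ref{eqn:integral-expansion-f} that underlies its proof—supplies a $K_e$ such that the edge-weight error is below $\eta$ for every $K \geq K_e$. Because there are finitely many edges, $K \coloneqq \max_e K_e$ is a finite integer, and for this $K$ every edge weight is approximated to within $\eta$; write $w_e$ and $\hat w_e$ for the exact and quadrature weights, so that $\abs{w_e - \hat w_e} < \eta$ for all $e$.

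Next I would propagate this bound to the path costs. Since $p_0 + \lambda > 0$ and $p_\Psi \geq 0$, every edge weight is strictly positive, so any cost-minimizing path is simple and therefore uses at most $N-1$ edges. Let $P^\ast$ realize $L_n(x,y)$; evaluating the quadrature cost along $P^\ast$ can only overestimate $\hat L_{n,K}(x,y)$, giving $\hat L_{n,K}(x,y) \leq L_n(x,y) + (N-1)\eta$. Running the symmetric argument along the quadrature-optimal path gives $L_n(x,y) \leq \hat L_{n,K}(x,y) + (N-1)\eta$. Combining the two bounds yields $\abs{L_n(x,y) - \hat L_{n,K}(x,y)} \leq (N-1)\eta < N\eta = \delta$, and since $(N-1)\eta$ does not depend on the chosen pair, the same $K$ works simultaneously for all $x,y \in \cX$.

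The main obstacle is precisely that the edge-wise estimate of Lemma~\ref{lem:approx-edge-weights} does not, on its own, control a \emph{path}, because a path may contain many edges; the resolution is that finiteness of $\cX$ simultaneously caps the number of distinct edges (permitting one uniform $K$ via the maximum) and caps the number of edges on any optimal simple path by $N-1$ (permitting the scaling $\eta = \delta/N$). A secondary point worth flagging is the upgrade of Lemma~\ref{lem:approx-edge-weights} from ``there exists a $K$'' to ``for all sufficiently large $K$,'' which is what legitimizes taking the maximum over edges; this is immediate, since the quadrature is a convergent Riemann sum of the continuous integrand, and convergence of a sequence already means the error stays below any fixed threshold for all large indices.
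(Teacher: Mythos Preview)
Your proof is correct and follows the same strategy as the paper: bound the per-edge quadrature error via Lemma~\ref{lem:approx-edge-weights} and sum over the edges of an optimal path. Your version is in fact a bit tidier than the paper's---by bounding the path length by $N-1$ via simplicity (rather than by the paper's $M$, the maximal length among $L_n$-optimal paths) you obtain both directions of the inequality symmetrically, and you also make explicit the ``for all sufficiently large $K$'' upgrade needed to choose a single uniform $K$ over the finitely many edges.
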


\begin{proof}
    Let $M$ be the length (in number of edges) between all $L_n$ minimal paths connecting $x,y$ for any 
    two $x,y$. Then, we may apply Lemma Approximating Edge Weights to get a $K$ such that the edge weight 
    error is no more than $\delta/M$. Then, we have for that choice of $K$ and any path $(v_0,\dots,v_m)$ that 
    \begin{align*}
        &\abs{\sum_{i = 0}^{m-1} L(\overline{(v_i,v_{i+1})};\Psi,\lambda) - \frac{\norm{y - x}_2}{K}\sum^{m-1}_{i = 0} \sum_{j = 0}^{K-1}\frac{p_0 + \lambda}{p_\Psi\paren{\paren{1 - \frac jK}v_i + \frac jK v_{i+1}} + \lambda}}\\
        &= \abs{\sum_{i = 0}^{m-1} \paren{L(\overline{(v_i,v_{i+1})};\Psi,\lambda) - \frac{\norm{y - x}_2}{K} \sum_{j = 0}^{K-1}\frac{p_0 + \lambda}{p_\Psi\paren{\paren{1 - \frac jK}v_i + \frac jK v_{i+1}} + \lambda}}}\\
        &\leq \sum_{i = 0}^{m-1} \abs{L(\overline{(v_i,v_{i+1})};\Psi,\lambda) - \frac{\norm{y - x}_2}{K} \sum_{j = 0}^{K-1}\frac{p_0 + \lambda}{p_\Psi\paren{\paren{1 - \frac jK}v_i + \frac jK v_{i+1}} + \lambda}}\\
        &\leq \sum_{i = 0}^{m-1} \frac\delta M = \frac mM\delta \leq \delta
    \end{align*}
    This bound holds for all paths, hence it holds for the minimal paths too. Let 
    $\bv^* \coloneqq (v_0^*,\dots,v_m^*)$ be an $L_n$ minimizing path from $x,y$, then 
    \begin{align*}
        &\abs{L_n(x,y) - \hat L_{n,K}(x,y)}\\
        &\leq \abs{\sum_{i = 0}^{m-1} L(\overline{(v_i^*,v^*_{i+1})};\Psi,\lambda) - \frac{\norm{y - x}_2}{K}\sum^{m-1}_{i = 0} \sum_{j = 0}^{K-1}\frac{p_0 + \lambda}{p_\Psi\paren{\paren{1 - \frac jK}v^*_i + \frac jK v^*_{i+1}} + \lambda}}\\
        &\leq \delta.
    \end{align*}
\end{proof}

Finally, we present the final proof of Theorem \ref{thm:conv-of-k-approx-lin-interp-cost-to-geodesic-cost}. 

\begin{proof}
    The vertices and edges of $\cG_n$ and $\hat \cG_{n,K_n}$ are the same, so the existence of a path connecting $x$ and $y$ in $\cX$ follow from the same arguments as in Prop. \ref{prop:conv-of-linear-interp-cost}.

    Point 2 follows from combining point 2 of Prop. \ref{prop:conv-of-linear-interp-cost} with Lemma \ref{lem:approx-lin-interp-cost}.

    For $\lambda > 0$, $\frac{p_0 + \lambda}{p_\Psi(x) + \lambda}$ is smooth on compact set, and so has uniformly bounded gradients. Hence, for any $\delta$, there is a uniform $K$ that can be so that for all $\overline{(x,y)}$ such that $x,y \in \Omega$. Hence, as $n\to\infty$,
    \begin{align}
        \abs{L(e;\Psi,\lambda) - \frac{\norm{y - x}_2}{K}\sum_{i = 0}^{K-1}\frac{p_0 + \lambda}{p_\Psi\paren{\paren{1 - \frac iK}x + \frac iK y} + \lambda}} \to 0
    \end{align}
    uniformly in $e$. Hence, for any sequence $\paren{l_{\bv^{(n)}}}_{n = 1,\dots}$ of paths uniformly bounded in length, 
    \begin{align}
        \label{eqn:unif-conv-of-arc-lenghs}
        \lim_{n \to \infty}\abs{\sum_{i = 0}^{m_n-1}\paren{L(\overline{(v_i,v_{i+1})};\Psi,\lambda) - \frac{\norm{y - x}_2}{K} \sum_{j = 0}^{K_n-1}\frac{p_0 + \lambda}{p_\Psi\paren{\paren{1 - \frac jK}v_i + \frac jK v_{i+1}} + \lambda}}} = 0,
    \end{align}
    where $l_{\bv^{(n)}} = (v_0,\dots,v_{m_n})$. Put simply, for a sequence of finite-length paths in $\cG$, the costs of $L_n$ and $\hat L_{n,K_n}$ converge to each other as $n\to\infty$.

    Now, let $l_{\bv^{(n)}}\in \argmin \hat L_{n,K_n}(x,y)$. We have that $\frac{p_0 + \lambda}{p_\Psi(x) + \lambda}$ is bounded from below, hence $l_{\bv^{(n)}}$ are all uniformly bounded in length. Hence, Eqn. \ref{eqn:unif-conv-of-arc-lenghs} applies, and so $\liminf_{n \to \infty} L_n(l_{\bv^{(n)}}) \leq \lim_{n \to \infty} \hat L_{n,K_n}(l_{\bv^{(n)}})$. The same argument holds reversing the roles of $L_n$ and $\hat L_{n,K_n}$, hence any minimizing sequence of $L_n$ is one of $\hat L_{n,K_n}$ and vice-versa.

    Now suppose that $(l_{\bv^{(n)}})_{n = 1,\dots} \in \argmin \hat L_{n,K_n}(x,y)$ is a sequence of optimal paths. It is also a sequence of optimal paths of $L_n$. Thus, any subsequence of $(l_{\bv^{(n)}})$ has a further subsequence so that Theorem \ref{prop:conv-of-linear-interp-cost} point 3 applies, and so has a further subsequence that converges in Hausdorff distance to $S_\gamma$.
\end{proof}

{\section{Algorithm}\label{app:algorithm}}

Algorithm \ref{alg:hat-g} describes the strategy to compute $\hat \cG_{n,K}$. From $\hat \cG_{n,K}$, we can compute the minimal path between two end points $x$ and $y$ and its length. This requires us to solve a standard graph theorem problem, which can be solved with Dijkstra's algorithm, see e.g. \cite{crauser1998parallelization,ortega2013new}. A GPU accelerated python implementation of the single source shortest path (SSSP) algorithm can be found in the RAPIDs library~\cite{rapids2023}.

\begin{algorithm}
\caption{Compute $\hat \cG_{n,K}$} \label{alg:hat-g}
\begin{algorithmic}[1]
\REQUIRE edge $\epsilon, X \subset \cX, \lambda, p$

\text{\# Compute $\cG_n$ with vertices $V$, edges $E$}
\STATE $V,E \gets X,\emptyset$
\FOR{$(x,y) \in X \times X$, $x \neq y$}
    \IF{$d_2(x,y) \leq \epsilon$}
        \STATE $E$.add$(x,y)$
    \ENDIF
\ENDFOR

\text{\# Compute edge weights, $W\colon E \to \R$}
\STATE $W$ reset
\FOR{edge $(x,y) \in E$}
    \STATE $W((x,y)) \gets \text{Integral Quadrature for } L_g(\ell_{x,y})$
\ENDFOR
\STATE \textbf{return} $\hat \cG_{n,K} = (\cG_n,W)$.
\end{algorithmic}
\end{algorithm}

\section{Experiment Settings} \label{append:settings}

\subsection{Section \ref{sec:deepgenerative:approx-and-conv}}

In Figure \ref{fig:n-convergence}, we used a $p_\Psi$ given by
\begin{align}
    p_\Psi(x) \coloneqq \frac{e^{- 10 \abs{\frac34 - \norm{x}_2}}}{Z}.
\end{align}
defined over $[0,1]^2$, where $Z$ is a normalization constant. We used $K = 10$, $\lambda = 0.01$ $\epsilon = 0.158$. The values of $n$ chosen ranged from $n = 100$ to $n = 10,000$. The geodesic error was measured between the points $(-1,0)$ and $(1,0)$. For each value of $n$, we computed the associated geodesic error 20 times. Figure \ref{fig:n-convergence} reports the average error across the 20 trials.

\subsection{Section \ref{sec:experiments_clustering}}

We use $n=200$, $K=10$, $\lambda = 10^{-8}$, and $\epsilon=10$ to compute $\hat{\mathcal{G}}_{n,K}$. We use spectral clustering in the \texttt{scikit-learn} Python library, and Ising clustering from \url{https://github.com/kumagaimasahito/Ising-based_Kernel_Clustering}. We use the default clustering hyper-parameters. For evaluation, we use normalized mutual information also in \texttt{scikit-learn}. 

\subsection{Section \ref{sec:experiments_embedding}}

We use $n=200$, $K=10$, $\lambda = 10^{-8}$, and $\epsilon=10$ to compute $\hat{\mathcal{G}}_{n,K}$. We use t-SNE in the \texttt{scikit-learn} Python library.

\subsection{Section \ref{sec:experiments_interpolating}}

\textbf{Two Moons.} We train a RealNVP model using the code in \url{https://github.com/xqding/RealNVP}. We use $n=1024$, $K=10$, $\lambda = 10^{-8}$, and $\epsilon=10$ to compute $\hat{\mathcal{G}}_{n,K}$. For latent interpolation, given two points, we map them to latent vectors using the RealNVP model, linearly interpolate the latent vectors, and then push the latent vectors to data space by again using the RealNVP model.

\textbf{MNIST.} We use $n=2000$, $K=10$, $\lambda = 0.005$, and adaptively choose $\epsilon$ for each vertex in $\hat{\mathcal{G}}_{n,K}$ such that it always has at least $5$ neighbors. For the baseline, we train a Consistency Trajectory Model on MNIST using the code from \url{https://github.com/sony/ctm}, and interpolate images by linearly interpolating latent vectors and pushing the vectors through the model.

\end{document}